\def\eqref#1{equation~\ref{#1}}
\def\1{\bm{1}}
\DeclareMathAlphabet{\mathsfit}{\encodingdefault}{\sfdefault}{m}{sl}
\SetMathAlphabet{\mathsfit}{bold}{\encodingdefault}{\sfdefault}{bx}{n}
\title{Scale-Free Graph-Language Models}
\newtheorem{proposition}{Proposition}
\newtheorem{definition}{Definition}
\newcommand{\circlednum}[1]{\ding{\numexpr171+#1\relax}} % 172 是圆圈数字1，基于此计算其他数字
\definecolor{darkred}{rgb}{0.6, 0.0, 0.0} 
\definecolor{darkblue}{rgb}{0.0, 0.0, 0.6}
\author{Jianglin~Lu$^{1}$\thanks{Corresponding author: \texttt{jianglinlu@outlook.com}} \ \  \ Yixuan Liu$^{2}$\ \ Yitian Zhang$^{1}$\ \  Yun Fu$^{1,3}$  \\
$^{1}$Department of Electrical and Computer Engineering, Northeastern University \\
$^{2}$Network Science Institute, Northeastern University\\
$^{3}$Khoury College of Computer Science, Northeastern University
}
\begin{document}

\maketitle

\begin{abstract}

Graph-language models (GLMs) have demonstrated great potential in graph-based semi-supervised learning. A typical GLM consists of two key stages: graph generation and text embedding, which are usually implemented by inferring a latent graph and finetuning a language model (LM), respectively. However, the former often relies on artificial assumptions about the underlying edge distribution, while the latter requires extensive data annotations.
To tackle these challenges, this paper introduces a novel GLM that integrates graph generation and text embedding within a unified framework.
Specifically, for graph generation, we leverage an inherent characteristic of real edge distribution—the \textit{scale-free property}—as a structural prior. 
We unexpectedly find that this natural property can be effectively approximated by a simple $k$-nearest neighbor (KNN) graph.
For text embedding, we develop a graph-based pseudo-labeler that utilizes scale-free graphs to provide complementary supervision for improved LM finetuning.
Extensive experiments on representative datasets validate our findings on the scale-free structural approximation of KNN graphs and demonstrate the effectiveness of integrating graph generation and text embedding with a real structural prior.
Our code is available at \href{https://github.com/Jianglin954/SFGL}{\textcolor{black}{https://github.com/Jianglin954/SFGL}}.
\end{abstract}

\section{Introduction}
Recently, graph-language models (GLMs) have been widely explored in graph-based semi-supervised classification on documents, especially for citation networks \citep{qin2023disentangled, yu2023empower, jianglin2023LGI, he2023harnessing}. 
When designing a GLM for classification, two key challenges arise: {\textit{graph generation}}—how to generate a reasonable graph structure for the given documents, and {\textit{text embedding}}—how to encode the textual sequences into meaningful semantic features.
To address these problems, various GLMs have been proposed, which can be broadly categorized into latent graph inference (LGI) models and language-assisted graph (LAG) models.

LGI models focus on graph generation and typically rely on feature engineering approaches, such as bag-of-words~\citep{bow}, TF-IDF \citep{AIZAWA200345}, and skip-gram~\citep{skipgram}, to encode textual sequences into shallow representations. 
These representations serve as the foundation for optimization objectives that jointly learn the underlying graph structure and node embeddings using graph neural networks (GNNs) \citep{LDS, SLAPS, jianglin2023LGI, DGM}.
\textit{However, LGI models typically rely on artificial assumptions about the underlying edge distribution, which may not accurately reflect the real graph structure}.

On the other hand, LAG models assume a predefined graph structure and focus on enhancing text embedding through powerful language models (LMs), such as DeBERTa \citep{he2021deberta} and GPT \citep{chatgpt}.
The core idea behind LAG models is to finetune LMs on the target datasets, thereby generating more informative text embeddings \citep{yu2023empower, duan2023simteg, he2023harnessing}.  
\textit{However, finetuning a pretrained LM typically requires target datasets with extensive annotations \citep{GPT3}, which is often impractical for semi-supervised learning tasks. When sufficient annotations are unavailable, LM finetuning may become unreliable.}

In this paper, we propose a novel {scale-free graph-language (SFGL)} model that integrates graph generation and text embedding into a unified framework. {The proposed SFGL framework addresses the challenges of relying on specious edge distribution assumptions and mitigates the limitations of insufficient supervision in LM finetuning}. This is accomplished by identifying an inherent structural nature of real networks for rational graph generation and incorporating a graph-based pseudo-labeler to enhance LM finetuning.
Specifically, we investigate a fundamental characteristic of edge distribution in real networks: the \textit{scale-free property} \citep{barabasi2003scale, radicchi2011citation, barabasi2016network}, which is primarily characterized by a few highly connected hubs and numerous small nodes with few edges (see Fig. \ref{powerlaw} for an example). 
The scale-free edge distribution is prevalent in real-world networks, particularly in citation networks, making it an ideal structural prior for graph generation. However, the dynamic and sequential nature of its formation process makes efficiently fitting this distribution highly challenging.
Fortunately, we reveal that \textit{a $k$-nearest neighbor (KNN) graph, constructed using cosine similarity as the distance metric and an appropriately chosen $k$, closely approximates a scale-free network}.
We subsequently leverage this scale-free graph to develop a graph-based pseudo-labeler, which provides complementary supervision for enhancing LM finetuning. 
Consequently, the improved LMs can generate semantically enriched text embeddings. 
Note that our proposed SFGL model can be trained iteratively to further enhance performance. 
We conduct extensive experiments to validate our findings and highlight the potential of GLMs built on scale-free structures.
Our key contributions are summarized as follows:

\begin{itemize}
\item We identify two key challenges in existing GLMs: {artificial structural assumptions in graph generation} and {unreliable LM finetuning for text embedding}. We propose addressing these challenges simultaneously by exploring a well-grounded graph structural prior.

\item We leverage the scale-free edge distribution in real networks as our graph structural prior. Our empirical validation and analysis reveal that a KNN graph, constructed using cosine similarity with an appropriately chosen $k$, effectively approximates a scale-free network.

\item To the best of our knowledge, the proposed SFGL is the first work to unify graph generation and text embedding within a GLM framework, highlighting the synergistic potential of GNNs and LMs under a scale-free structural prior.
\end{itemize}

\section{Preliminaries}
\subsection{Problem Definition}
\label{problem_define}
Given a set of documents denoted as $\mathcal{D}=\{\mathcal{X}_L, \mathcal{X}_U, \mathcal{Y}_L\}$, 
where $\mathcal{X}_L=\{\mathbf{x}_i\}^{m}$ and $\mathcal{X}_U=\{\mathbf{x}_i\}^{n}$ represent the text sequences of $m$ labeled and $n$ unlabeled documents, respectively, and $\mathcal{Y}_L=\{\mathbf{y}_i\}^{m}$ is the label set for the labeled samples, with $m\ll n$ indicating a scarcity of labeled nodes compared to the abundance of unlabeled ones, we tackle a graph-based semi-supervised classification task that involves training a model on $\mathcal{D}$ to classify unlabeled documents $\mathcal{X}_U$. 
Here, we face two key problems: {\textit{graph generation}}—how to generate a reasonable graph structure for the given documents, and {\textit{text embedding}}—how to encode the textual sequences into meaningful semantic features.

For graph generation, the main challenge lies in identifying a reasonable structural prior that characterizes the underlying distribution of edges. The choice of this prior is critical, as it determines both the complexity of the learning model and the inductive bias introduced. We suggest that an effective 
\begin{wrapfigure}{r}{5.4cm}
\centering
%\vspace{-1mm}
\includegraphics[scale=0.295,trim=36 25  40 40,clip]{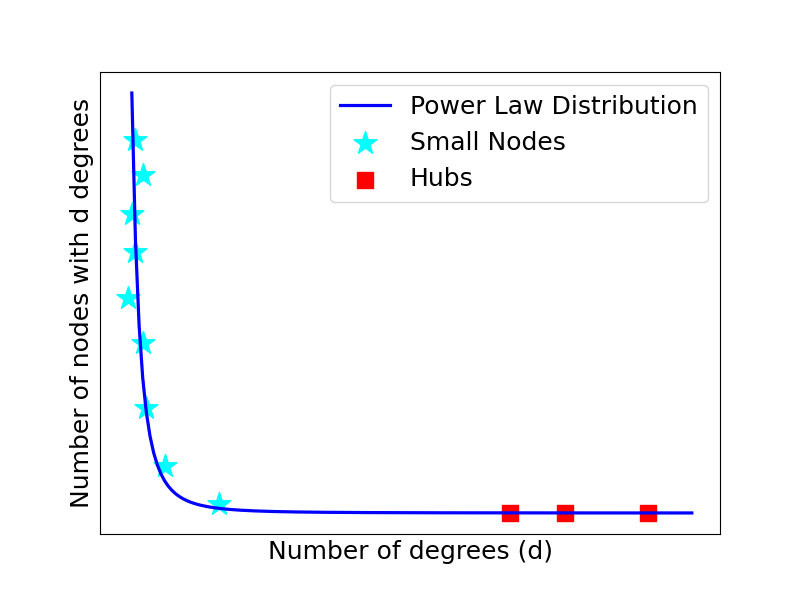}
\caption{Illustration of a scale-free network, highlighting a few \textit{hubs} and a large number of \textit{small nodes}.}
\vspace{-2mm}
\label{powerlaw}
\end{wrapfigure}
structural prior should exhibit two key characteristics: \textit{real}—it should reflect an inherent attribute of the data; and \textit{simple}—it should allow for straightforward graph generation without requiring additional model training.

For text embedding, advanced LMs are preferred over traditional feature engineering approaches for their superior ability to capture semantic information from textual sequences of documents. 
The primary challenge here lies in improving LM finetuning in a semi-supervised setting with very limited labeled samples.
We suggest addressing this challenge with a graph-based semi-supervised learning model that propagates supervised information from labeled to unlabeled samples through the graph structure. 
However, this brings us back to the central question: how to identify a reasonable structural prior for graph generation.

\subsection{Scale-Free Networks}
\label{SFN}
We investigate a fundamental structural nature of real-world networks: the \textit{scale-free property} \citep{barabasi1999emergence, barabasi2003scale}.
In essence, real citation networks exhibit scale-free characteristics \citep{Redner_1998, hummon1989connectivity, radicchi2011citation}. Their degree distribution $\mathbb{P}(\theta)$ follows a power law:
$\mathbb{P}(\theta) \propto \theta^{-\alpha}$,
where $\theta$ denotes the degree of a node and $\alpha$ is a scaling parameter of the distribution \citep{clauset2009power}.
For illustration, Fig. \ref{powerlaw} depicts the degree distribution of a scale-free network with $\alpha=2.8$.
As observed, the distribution is heavy-tailed, with a few highly connected nodes, known as \textit{hubs}, and many \textit{small nodes} with only a few edges \citep{barabasi1999emergence, barabasi2016network}.
We suggest that leveraging the scale-free property for graph generation is a strong choice, as it reflects an inherent characteristic of citation networks and aligns with our \textit{real} principle.
{The challenge lies in constructing a latent graph that can (approximately) form a scale-free network.}

\begin{figure*}[!tb]
    \centering
    \vspace{-8mm}
    \subfigure[$k=2$]{\includegraphics[scale=0.30,trim=30 400  620 30,clip]{./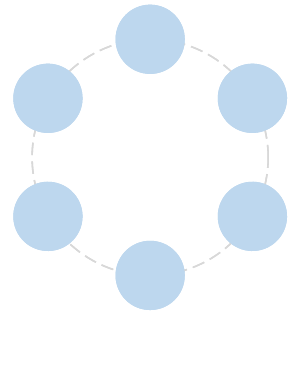}\label{K2}} 
    \quad \  \subfigure[$k=3$]{\includegraphics[scale=0.30,trim=35 380  530 30,clip]{./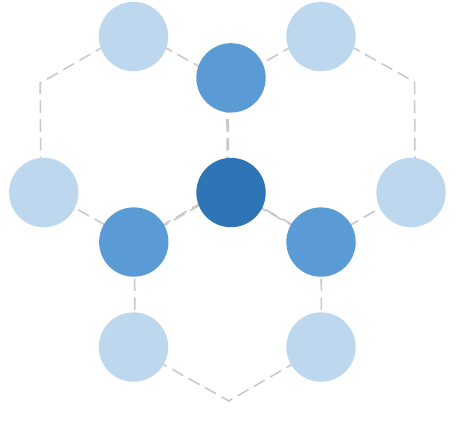}\label{K3}} 
    \quad \ \subfigure[$k=4$]{\includegraphics[scale=0.30,trim=30 360  530 30,clip]{./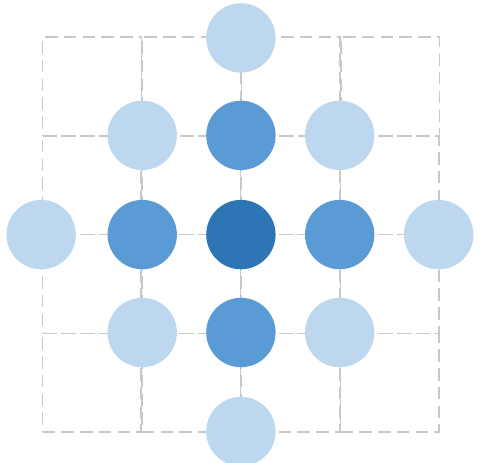}\label{K4}}
    \quad \ \subfigure[$k=6$]{\includegraphics[scale=0.30,trim=25 300  450 30,clip]{./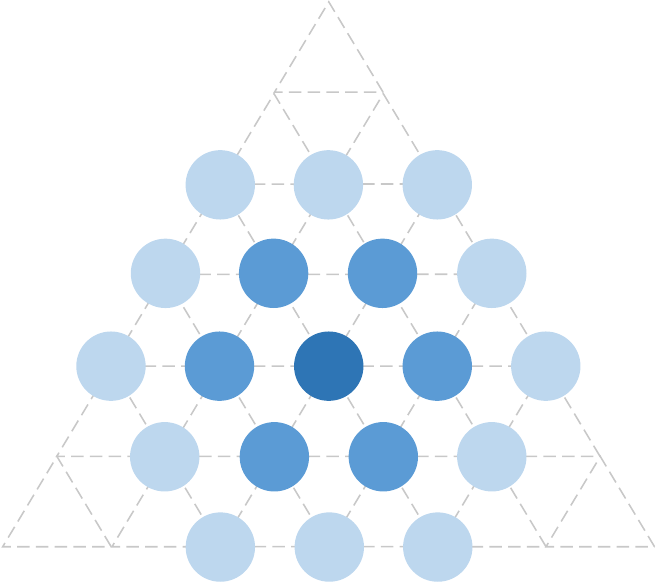}\label{K6}}
    \vspace{-2mm}
    \caption{Uniform node distribution within structured arrangements causes consistent connectivity.}
    \label{knngraph}
     \vspace{-3mm}
\end{figure*}

\subsection{K-Nearest Neighbor Graphs}
\label{KNN}
We begin by exploring KNN graphs, as their construction aligns with our \textit{simple} principle.
The construction process is straightforward: for each node, we identify its $k$-nearest neighbors using a specified distance metric based on node features and establish an edge between the node and each of its $k$ nearest neighbors.
The degree distribution of KNN graphs is primarily shaped by three key factors: \circlednum{1} \textit{node distribution}, which is unknown a priori; \circlednum{2} \textit{the choice of similarity metric}, which affects the probability of node connections; and \circlednum{3} \textit{the value of $k$}, which directly determines node degrees. 
Without detailed knowledge on these factors, characterizing the degree distribution of KNN graphs remains challenging. 
We first explore simple cases and analyze each factor independently.

In the absence of prior knowledge about the underlying node distribution, we analyze attribute-free nodes that are uniformly distributed across regular grid structures, including circles, hexagons, quadrilaterals, and triangles.
As shown in Fig. \ref{knngraph}, the subgraph formed by any node and its first-order neighbors exhibits characteristics of a random network \citep{barabasi1999emergence}. Within a specified value of $k$, the connection probabilities between two nodes within the subgraph remain identical. Thus, building KNN graphs on such uniform distributions yields consistent node connectivity.

In KNN graphs, various distance metrics can be used to measure similarity between nodes, including Euclidean, Manhattan, and Cosine distances.
Euclidean distance is sensitive to variations in feature scale and highly influenced by outliers. Manhattan distance is more robust to outliers but disregards the actual geometric relationships between data points. Cosine distance is particularly suitable for high-dimensional, sparse data, as it prioritizes vector direction over magnitude.

The value of $k$ directly determines the magnitude of node degrees. As $k$ increases, more edges are formed, leading to higher node degrees. In the extreme case where $k=m+n-1$, the KNN graph becomes a complete graph, where every node is connected to all others and has the same degree.

\section{Scale-Free Graph-Language Model}

\subsection{Scale-Free Graph Generation}
\label{SFG}

\textbf{Hypothesis \& Analysis}.
In this paper, we argue that constructing a KNN graph that approximates a scale-free network is practically feasible. To support this, we first present the following hypothesis:

\begin{mdframed}[linecolor=black,linewidth=0.5pt]
\textit{A $k$-nearest neighbor graph, when constructed on citation networks using cosine similarity as the distance metric and an appropriately chosen $k$, approximates a scale-free network.}
\end{mdframed}

To investigate the proposed hypothesis, we present the following definitions and propositions.
\begin{definition}[]%Degree centrality
\label{Degree}
Given an undirected graph $\mathcal{G}:=(\mathcal{V}, \mathcal{E})$ with $|\mathcal{V}|$ nodes and $|\mathcal{E}|$ edges, the degree centrality of a node $v$ is $C_D(v) = \theta(v)$, where $\theta(v)$ is the degree of $v$.
\end{definition}

\begin{definition}[]%In-degree and out-degree centrality
\label{def:inj}
Given a directed graph $\mathcal{G}:=(\mathcal{V}, \mathcal{E})$ with $|\mathcal{V}|$ nodes and $|\mathcal{E}|$ edges, the in-degree centrality and out-degree centrality of a node $v$ are $C_{ID}(v) = \theta_i(v)$ and $C_{OD}(v) = \theta_o(v)$, where $\theta_i(v)$ and $\theta_o(v)$ are the in-degree and out-degree of $v$, respectively.
\end{definition}

\begin{proposition}
For a scale-free network, assuming that there is at most one node $v$ whose degree belongs to $[\theta_{max}, \infty)$, we have $\theta_{max} = \theta_{min} |\mathcal{V}|^{\frac{1}{\alpha-1}}$, where $\theta_{max}$ and $\theta_{min}$ refer to the maximum and minimal degree in the scale-free network.
\label{prop2}
\end{proposition}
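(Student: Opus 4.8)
The plan is to exploit the power-law form of the degree distribution together with the \emph{at most one node} assumption, treated as a natural upper-tail cutoff. First I would work in the continuous approximation, writing the normalized degree density as $P(\theta) = C\,\theta^{-\alpha}$ on $[\theta_{min}, \infty)$, and fix the constant $C$ by requiring $\int_{\theta_{min}}^{\infty} C\,\theta^{-\alpha}\, d\theta = 1$. Since $\alpha > 1$ for a genuine scale-free distribution, this integral converges and yields $C = (\alpha-1)\,\theta_{min}^{\alpha-1}$.

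Next I would compute the probability that a randomly chosen node has degree at least $\theta_{max}$, i.e.\ the upper-tail mass
\[
\mathbb{P}(\theta \ge \theta_{max}) = \int_{\theta_{max}}^{\infty} C\,\theta^{-\alpha}\, d\theta = \left(\frac{\theta_{min}}{\theta_{max}}\right)^{\alpha-1}.
\]
The expected number of nodes whose degree lies in $[\theta_{max}, \infty)$ is then $|\mathcal{V}|\cdot \mathbb{P}(\theta \ge \theta_{max})$.

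The key step is to interpret the hypothesis—that at most one node has degree in $[\theta_{max}, \infty)$—as setting this expected count equal to $1$, which is the standard cutoff condition identifying $\theta_{max}$ as the natural maximum degree of the network. This gives $|\mathcal{V}|\,(\theta_{min}/\theta_{max})^{\alpha-1} = 1$, and rearranging yields $\theta_{max} = \theta_{min}\,|\mathcal{V}|^{1/(\alpha-1)}$, exactly as claimed.

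I expect the main obstacle to be conceptual rather than computational: justifying the passage from the discrete degree sequence to the continuous density, and—more importantly—arguing that the \emph{at most one node} assumption is correctly captured by equating the \emph{expected} tail count to one. I would address this by observing that $\theta_{max}$ is precisely the threshold at which the expected number of nodes above it drops to one, so any strictly larger threshold would in expectation contain no nodes, which is consistent with $v$ being the unique maximal-degree hub. Establishing that this expectation-based cutoff faithfully encodes the stated assumption is the delicate point of the argument.
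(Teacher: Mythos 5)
Your proposal is correct and follows essentially the same route as the paper's proof: both normalize the continuous power-law density to get $\xi=(\alpha-1)\theta_{min}^{\alpha-1}$, compute the tail integral $\int_{\theta_{max}}^{\infty}\mathbb{P}(\theta)\,d\theta=(\theta_{min}/\theta_{max})^{\alpha-1}$, and impose the cutoff condition (your ``expected tail count equals one'' is identical to the paper's setting the tail probability to $1/|\mathcal{V}|$). Your closing remark about the delicacy of encoding ``at most one node'' as an expectation-based cutoff is a fair point that the paper itself glosses over, but it does not change the argument.
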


\begin{proposition}
For a KNN graph, we have $\sum_{v\in \mathcal{V}}C_{ID}(v) = \sum_{v\in \mathcal{V}}C_D(v)  - \sum_{v\in \mathcal{V}}C_{OD}(v) = 2 |\mathcal{E}|  - k |\mathcal{V}| $, where $|\mathcal{E}| \leq k |\mathcal{V}|$.
\end{proposition}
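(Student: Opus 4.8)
The plan is to exploit the defining asymmetry of the KNN construction: every node selects exactly $k$ nearest neighbors, so the directed version of the graph has constant out-degree, while the undirected realization used to count $\mathcal{E}$ may merge reciprocated selections. First I would fix notation by viewing the KNN graph as a directed graph $\vec{\mathcal{G}}$ in which an arc $v \to u$ records that $u$ is one of $v$'s $k$ nearest neighbors, and letting $\mathcal{G} = (\mathcal{V}, \mathcal{E})$ denote its undirected realization, where $\{u,v\} \in \mathcal{E}$ whenever $v \to u$ or $u \to v$.

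The first two totals are immediate. Since each node contributes exactly $k$ outgoing arcs, $C_{OD}(v) = \theta_o(v) = k$ for every $v$, whence $\sum_{v\in\mathcal{V}} C_{OD}(v) = k|\mathcal{V}|$. Applying the handshaking lemma to the undirected graph gives $\sum_{v\in\mathcal{V}} C_D(v) = 2|\mathcal{E}|$. To obtain the first equality of the proposition, I would argue that every edge incident to $v$ is accounted for either by one of $v$'s own $k$ selections (the out-contribution) or by another node having selected $v$ without reciprocation (the in-contribution), so that $C_D(v) = C_{OD}(v) + C_{ID}(v)$ holds node-wise; summing and substituting the two totals yields $\sum_{v} C_{ID}(v) = \sum_{v} C_D(v) - \sum_{v} C_{OD}(v) = 2|\mathcal{E}| - k|\mathcal{V}|$.

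For the bound $|\mathcal{E}| \leq k|\mathcal{V}|$, I would count arcs: $\vec{\mathcal{G}}$ has exactly $k|\mathcal{V}|$ arcs, and the symmetrization map sending each arc to its underlying unordered pair is surjective onto $\mathcal{E}$, so $|\mathcal{E}|$ cannot exceed the number of arcs. Equivalently, writing $A$ for the number of reciprocated (mutual) pairs gives $|\mathcal{E}| = k|\mathcal{V}| - A \leq k|\mathcal{V}|$, with equality exactly when no two nodes are mutual nearest neighbors.

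The step I expect to require the most care is the node-wise decomposition $C_D(v) = C_{OD}(v) + C_{ID}(v)$, precisely because of these reciprocated selections: a mutual pair $\{u,v\}$ is a single undirected edge but is chosen from both endpoints, so I must be careful to attribute each undirected edge to exactly one of the out- or in-categories to avoid double counting. Making this bookkeeping explicit—counting a reciprocated edge once on the out-side and reading $C_{ID}(v)$ in the undirected realization as the unreciprocated incoming selections—is what simultaneously produces the clean total $2|\mathcal{E}| - k|\mathcal{V}|$ and explains why the edge count satisfies an \emph{inequality} rather than an equality.
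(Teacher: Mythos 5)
Your proof is correct and follows essentially the same route as the paper's: the node-wise split $C_D(v) = C_{ID}(v) + C_{OD}(v)$, the handshake identity $\sum_{v\in\mathcal{V}} C_D(v) = 2|\mathcal{E}|$, the constant out-degree $C_{OD}(v) = k$, and the observation that reciprocated selections are why $|\mathcal{E}| \leq k|\mathcal{V}|$. The only difference is that you make explicit the bookkeeping the paper leaves implicit---interpreting $C_{ID}(v)$ in the undirected realization as the unreciprocated incoming selections, so that the node-wise decomposition is consistent with counting each mutual pair as a single edge---which is a welcome clarification rather than a genuinely different argument.
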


Considering their inherent biases, KNN graphs exhibit significant similarities to scale-free networks, which we detail as follows.
\circlednum{1} \textit{Homophily:} citation networks exhibit a high degree of homophily, 
\begin{wrapfigure}{r}{5.9cm}
\centering
%\vspace{-3mm}
\includegraphics[scale=0.30,trim=0 0  0 50,clip]{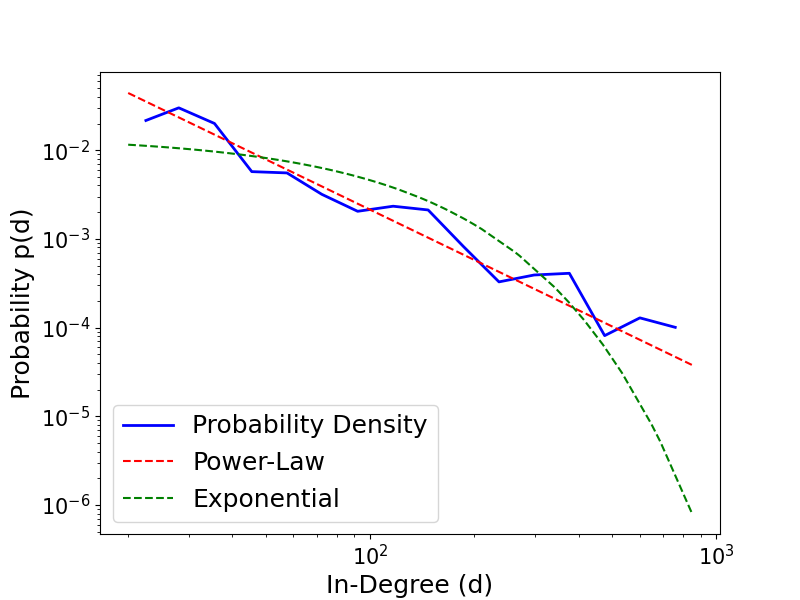}
\vspace{-4mm}
\caption{Curve fitting to the in-degree distribution of a KNN graph ($k=5$) using Euclidean distance as the metric.} %   on the Cora dataset
\vspace{-2.5mm}
\label{Euclidean}
\end{wrapfigure}
where connected nodes usually have a high probability of belonging to the same class. Notably, the KNN algorithm is also based on the homophily assumption that similar nodes exist in close proximity.
\circlednum{2} \textit{Directed:} citation networks are inherently directed since citations have a clear direction from the citing publication to the cited publication. Coincidentally, the KNN graph is also directed as node $v$ being in the $k$-nearest neighbors of node $u$ does not imply node $u$ is in the $k$-nearest neighbors of node $v$.
\circlednum{3} \textit{Upper bound of degree}: the size of hubs in both scale-free networks and KNN graphs is constrained.
Proposition \ref{prop2} indicates that the size of hubs in scale-free networks has an upper bound of $d_{min} |\mathcal{V}|^{\frac{1}{\alpha-1}}$. Note that the maximal in-degree in a KNN graph is also bounded, and the out-degree of each node is always fixed as $k$, as shown in proposition \ref{prop1}.

We discuss the underlying reasons and rationale for using ``cosine similarity'' and the term  ``approximates''.
Cosine similarity focuses on the orientation of feature vectors rather than their magnitude, aligning with the syntactic patterns of word semantics. 
Consequently, few hubs with high in-degrees will appear in KNN graphs if the feature space exhibits some common orientations.
The term ``approximates'' is used because KNN graphs are constructed based on node features, which are influenced by the feature engineering approaches employed. After feature extraction, the node distribution may not accurately represent the original data distribution of citation networks.

\textbf{Empirical Validation}.
To validate our hypothesis, we construct KNN graphs using cosine similarity with $k=5$ and examine their degree distribution across multiple citation networks, including \texttt{Cora}, \texttt{Pubmed}, and \texttt{ogbn-arxiv} (see Appendix \ref{datasetdescription} for details). 
Figure \ref{in_degree_distribution} presents the in-degree distribution of KNN graphs on these datasets.
Our observations reveal that the generated KNN graphs contain massive small nodes alongside a few hubs,  resulting in in-degree distributions that resemble those of scale-free networks as depicted in Fig. \ref{powerlaw}. 
Notably, the degree distributions exhibit slight variations across datasets, which can be attributed to differences in node distributions and the feature extraction methods used to construct feature vectors.
To further investigate these degree distributions, we fit the in-degree distributions to exponential and power-law models—both characterized by heavy-tailed behavior. 
Figure \ref{fitting_curves} illustrates the fitting curves on log-log axes using logarithmically spaced bins. 
As shown, the in-degree distributions of KNN graphs align more closely with a power law, with estimated scaling exponents ($\alpha$) of $3.3$, $2.8$, and $2.7$ for \texttt{Cora}, \texttt{Pubmed}, and \texttt{ogbn-arxiv}, respectively. These empirical findings suggest that KNN graphs effectively capture the structural properties of citation networks, thus supporting our hypothesis.

We further investigate the degree distribution of KNN graphs using Euclidean distance and present the fitting curves in Fig. \ref{Euclidean}.
As shown, the resulting degree distribution does not fit well to a power-law model, indicating that Euclidean distances between vector points fail to accurately capture the similarities between textual features in citation networks.
This is because Euclidean distance primarily emphasizes the magnitude of vectors in linear space, neglecting important elements such as the angular distribution or relative orientation of vectors, which are essential for capturing syntactic patterns and semantic relationships of textual attributes. More comparisons are provided in Sec. \ref{ablation}.

\begin{figure*}[!tb]
    \subfigure[\texttt{Cora}]{\includegraphics[scale=0.245,trim=0 0  50 45,clip]{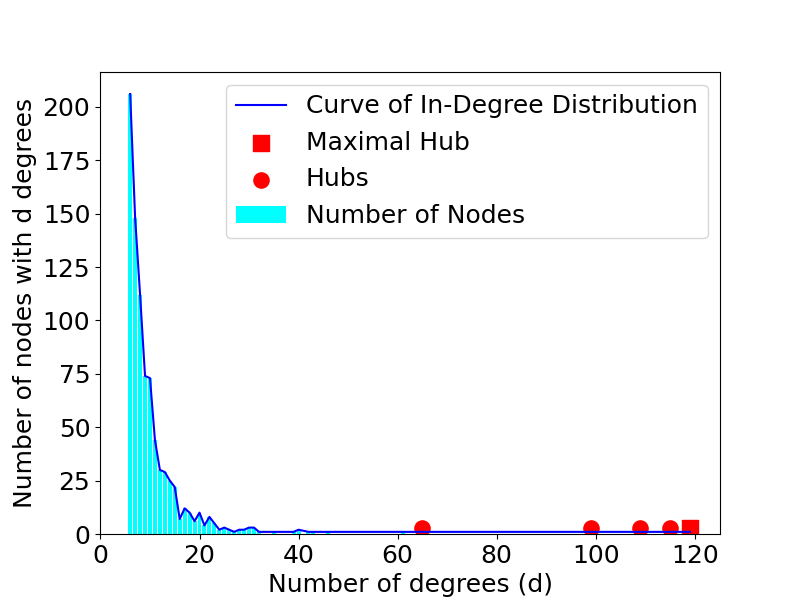}\label{fig-cora20}} 
    \subfigure[\texttt{Pubmed}]{\includegraphics[scale=0.245,trim=0 0  50 45,clip]{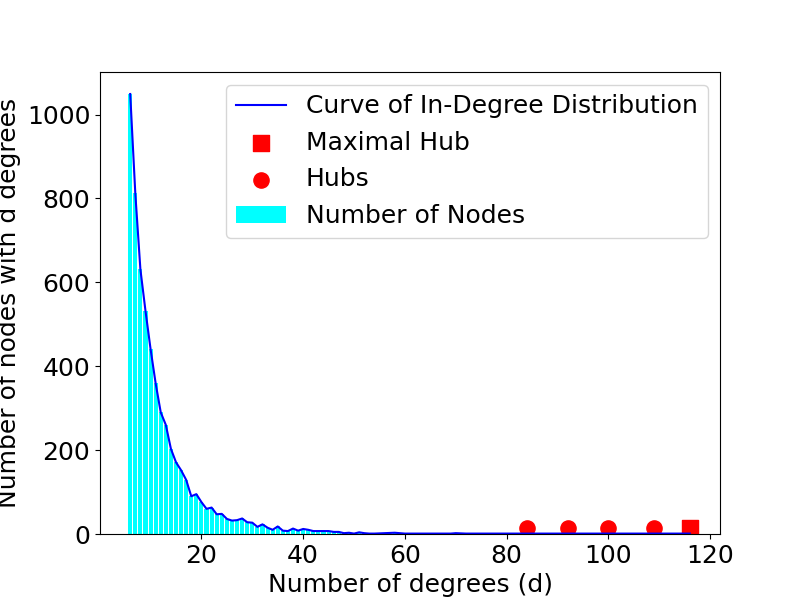}\label{fig-pubmed20}} 
    \subfigure[\texttt{ogbn-arxiv}]{\includegraphics[scale=0.245,trim=0 0  50 45,clip]{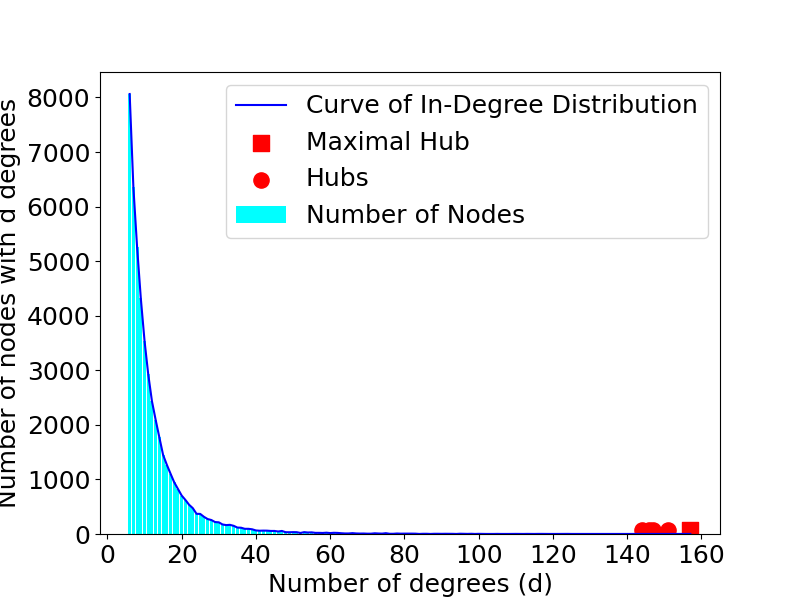}\label{fig-ogbn-arxiv20}}
    \vspace{-2mm}
    \caption{In-degree distribution of KNN graphs on different datasets, where the top-$5$ largest hubs are marked in red.}
    \label{in_degree_distribution}
     \vspace{-1mm}
\end{figure*}
 \begin{figure*}[!tb]
     \subfigure[\texttt{Cora}] {\includegraphics[scale=0.245,trim=0 0  50 50,clip]{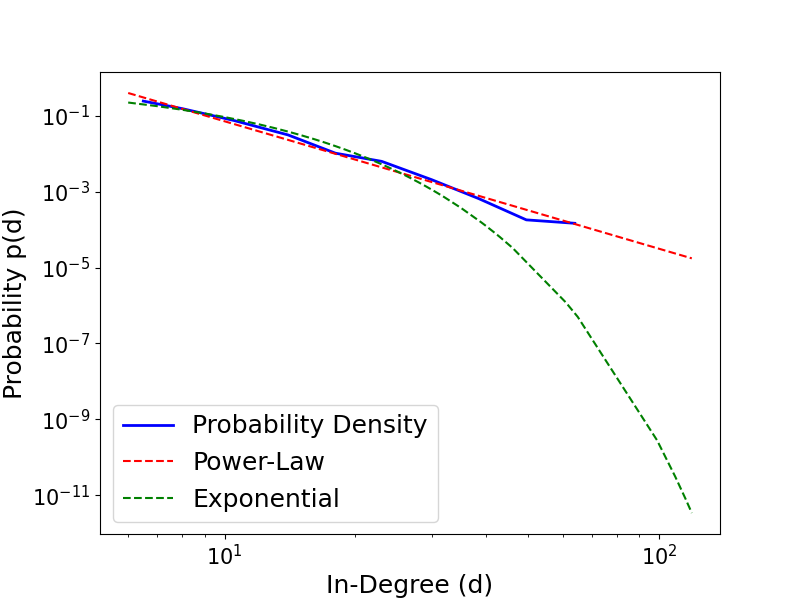}}
    \subfigure[\texttt{Pubmed}]{\includegraphics[scale=0.245,trim=0 0  50 50,clip]{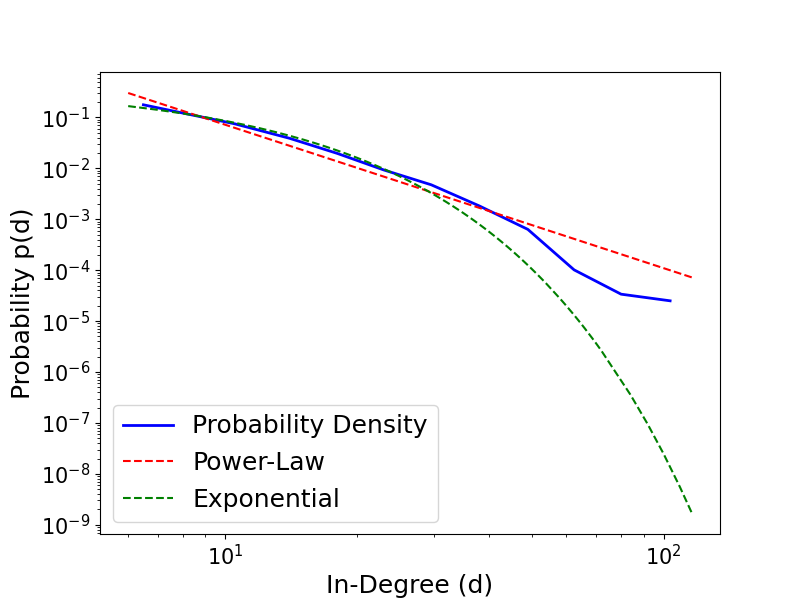}}
     \subfigure[\texttt{ogbn-arxiv}]{\includegraphics[scale=0.245,trim=0 0  50 50,clip]{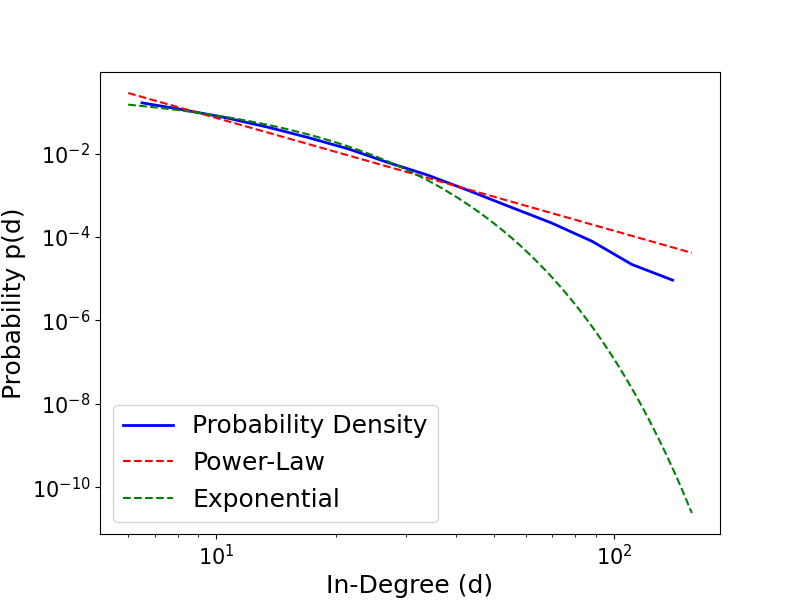}}
      \vspace{-2mm}
     \caption{Curve fitting to the in-degree distribution of KNN graphs on different datasets. }
     \vspace{-1mm}
     \label{fitting_curves}
 \end{figure*}

\textbf{Graph Construction}.
Building on the above analysis and validation, we generate a scale-free KNN graph for the given documents $\mathcal{D}$. We begin by encoding the text attributes $\mathcal{X}=\{\mathcal{X}_L \cup \mathcal{X}_U\}=\{\mathbf{x}_i\}^{m+n}$ into hand-crafted feature representations using a text encoder $\texttt{Enc}(\cdot)$: $\mathbf{f}_i=\texttt{Enc}(\mathbf{x}_i)$, where $\mathbf{f}_i$ represents the extracted feature vector for $\mathbf{x}_i$, and $\texttt{Enc}(\cdot)$ can be either a traditional model or a language model. 
Next, we construct a KNN graph on features $\mathbf{F}$ using cosine distance: $\mathbf{A}=\texttt{KNN}_{\texttt{cos}}(\mathbf{F}, k)$, where $\mathbf{F}=[\mathbf{f}_1, \mathbf{f}_2, \ldots, \mathbf{f}_{m+n}]$, $\mathbf{A}$ is the adjacency matrix of the resulting graph, $k$ is the number of nearest neighbors, and $\texttt{KNN}_{\texttt{cos}}(\cdot)$ denotes the KNN algorithm with cosine distance.
For simplicity, we consider a binary graph, \textit{i.e.,} $\mathbf{A}_{ij}=1$ if $\mathbf{A}_{ij} \in \texttt{topk}(\mathbf{A}_{i:})$; $\mathbf{A}_{ij}=0$, otherwise.

\subsection{Improved Text Embedding}
\label{TFE}
The previous section addresses the challenge in graph generation, as outlined in Sec. \ref{problem_define}. This section focuses on the challenge in text embedding, i.e., improving the finetuning of LMs. 

\textbf{Pseudo-Labeler}.
Leveraging the graphs generated in Sec. \ref{SFG}, we design a graph-based semi-supervised learning model to produce high-quality pseudo-labels, which enhance LM training.
Given the node features $\mathbf{F}$, adjacency graph $\mathbf{A}$, and available labels $\mathcal{Y}_L=\{\mathbf{y}_i\}^{m}$, we train a GNN using the standard node classification loss:
\begin{equation}
    \mathcal{L}_{\texttt{GNN}} = \sum_{i=1}^{m}\mathbf{y}_i^\top \ln \mathbf{z}_i, \quad  \mathbf{z}_i = \operatorname{softmax}(\texttt{GNN}_{\mathbf{\Phi}}(\mathbf{f}_i, \mathbf{A})),
\label{gnn_train}
\end{equation}
where $\mathbf{\Phi}$ represents the GNN parameters. 
Here, any GNN model, such as graph convolutional networks (GCN) \citep{kipf2016semi}, GraphSAGE \citep{hamilton2017inductive}, or graph attention networks (GAT) \citep{GAT}, can be employed (see Sec. \ref{ablation} for experimental comparisons).
Once trained, the pseudo labels $\hat{\mathbf{y}}_j$ for all unlabeled nodes ($j \in \{1, \ldots, n\}$) are generated by $\hat{\mathbf{y}}_j = \operatorname{argmax}(\texttt{GNN}_{\mathbf{\Phi}}(\mathbf{f}_j, \mathbf{A}))$, enabling effective label propagation.

\textbf{Text Embedding}.
\label{LTG}
To extract semantic-enriched text embeddings from documents, we finetune LMs on $\hat{\mathcal{D}}=\{\mathcal{X}_L, \mathcal{X}_U, \mathcal{Y}_L, \hat{\mathcal{Y}}_U\}$ 
where $\hat{\mathcal{Y}}_U = \{\hat{\mathbf{y}}_j\}^{n}$, using the following loss function: 
\begin{equation}
    {\mathcal{L}}_{\texttt{LM}} = \frac{1}{m}\sum_{i=1}^{m}\mathbf{y}_i^\top \ln \mathbf{s}_i + \frac{1}{n} \sum_{j=1}^{n}\hat{\mathbf{y}}_j^\top \ln \mathbf{s}_j,
\label{lm_finetune}
\end{equation}
where $\mathbf{s}_i = \operatorname{softmax}(\texttt{LM}_{\mathbf{\Theta}}(\mathbf{x}_i))$ and $\texttt{LM}_{\mathbf{\Theta}}(\cdot)$ is a pretrained LM with parameters $\mathbf{\Theta}$.
After finetuning, for each node $\mathbf{x}_i$, $i\in\{1, 2, \ldots, m+n\}$, we generate its text embedding $\mathbf{e}_i$ by extracting the last hidden states of the finetuned LM, \textit{i.e.,} $\mathbf{e}_i = \texttt{LM}_{\mathbf{\Theta}}(\mathbf{x}_i)$.

Following \citep{he2023harnessing}, we also utilize the large language model (LLM) GPT3.5~\citep{chatgpt} to enhance text embeddings.
We access GPT3.5 through its open API and treat it as an oracle, feeding it a prompt input $\mathbf{p}_i$ and obtaining the corresponding textual response $\mathbf{r}_i$ for each node $i$:
\begin{equation}
    \mathbf{r}_i = \texttt{LLM}(\mathbf{p}_i), \quad \mathbf{p}_i=\operatorname{concat}(\mathbf{x}_i, \mathbf{q}),
\end{equation}
where the prompt $\mathbf{p}_i$ is concatenated by the text sequences $\mathbf{x}_i$ and a task-specific question $\mathbf{q}$ \citep{he2023harnessing}.
The question $\mathbf{q}$ is standardized and universal for all nodes, which prompts GPT3.5 to classify each node and provide an explanation for its prediction (see Appendix for more details). 
Note that, we do not need to finetune GPT3.5 or access its inner structures or embeddings. 
Once the textual response $\mathbf{r}_i$ is obtained for $\mathbf{x}_i$, we replace $\mathbf{x}_i$ with $\mathbf{r}_i$ and finetune the $\texttt{LM}_{\mathbf{\Theta}}(\cdot)$ to generate updated text embeddings for the documents.

\subsection{Classification}
\label{final_class}
Up to this point, we have addressed the two key problems outlined in Sec. \ref{problem_define}, obtaining semantic-enriched text embeddings $\mathbf{E}=[\mathbf{e}_1, \mathbf{e}_2, \ldots, \mathbf{e}_{m+n}]$ and a meaningful graph $\mathbf{A}$.
We can now tackle the classification task by training a graph-based semi-supervised learning model on $\mathbf{E}$ and $\mathbf{A}$ using the available labels $\mathcal{Y}_L$. Following \citep{he2023harnessing}, we utilize a GNN as the classification model. 
We also evaluate the performance of the LMs with generated pseudo-labels. 
Notably, the proposed SFGL can use an iterative training mechanism. The improved text embeddings facilitate the generation of higher-quality pseudo-labels, which, in turn, further enhance the discriminability of the embeddings. 
We explore this iterative training strategy in Sec. \ref{ablation}. Due to space limitations, the detailed training stages are summarized in Appendix \ref{trainingstrategies}.

\section{Experiments}
\label{Experiments}

\begin{table*}[!tb]
    \footnotesize
    \caption{Performance of different models trained on four datasets with varying labeling rates, where ``$---$" indicates unreliable results due to an extremely limited number of training labels.}
    \centering
    \setlength{\tabcolsep}{1.5mm}{ 
    \begin{tabular}{l|ccccc}
        \toprule
         Datasets \& Method & \multicolumn{5}{c}{Labeling Rates (labels)} 
         \\
         \midrule
         \texttt{Cora}
         & $2.58\%_{(70)}$ & $5.17\%_{(140)}$ & $7.75\%_{(210)}$ & $15.51\%_{(420)}$ & $22.16\%_{(600)}$
        \\ 
        \midrule
        \texttt{GCN} %\citep{kipf2016semi}
        & $0.5898_{\pm0.0159}$ 
        & $0.6038_{\pm0.0116}$ 
        & $0.6358_{\pm0.0164}$ 
        & $0.6936_{\pm0.0055}$ 
        & $0.7114_{\pm0.0049}$ 
        \\ 
        \texttt{DeBERTa}  %\citep{he2021deberta}
        & $0.2406_{\pm0.0799}$ 
        & $0.3048_{\pm0.0198}$ 
        & $0.3668_{\pm0.0632}$ 
        & $0.6016_{\pm0.0495}$ 
        & $0.6904_{\pm0.0413}$ 
        \\ 
        \texttt{GCN+DeBERTa} %\citep{GLEM}
        & $0.2810_{\pm0.0136}$
        & $0.3598_{\pm0.0655}$
        & $0.4744_{\pm0.1281}$
        & $0.6482_{\pm0.0268}$
        & $0.6940_{\pm0.0286}$
        \\
        {\texttt{SFGL(ours)}}         
        & \textcolor{darkblue}{${0.6086}_{\pm0.0140}$} 
        & \textcolor{darkblue}{${0.6314}_{\pm0.0191}$} 
        & \textcolor{black}{$0.6612_{\pm0.0206}$} 
        & \textcolor{darkblue}{${0.6942}_{\pm0.0101}$} 
        & \textcolor{darkblue}{${0.7204}_{\pm0.0153}$} 
        \\
        \texttt{DeBERTa+GPT} %\citep{chatgpt} 
        & $0.2776_{\pm0.0128}$ 
        & $0.5450_{\pm0.1121}$ 
        & \textcolor{darkblue}{${0.6624}_{\pm0.0067}$} 
        & $0.6888_{\pm0.0189}$ 
        & $0.7082_{\pm0.0184}$ 
        \\ 
        \texttt{GCN+DeBERTa+GPT} %\citep{he2023harnessing} 
        & $0.4862_{\pm0.0489}$
        & $0.6272_{\pm0.0217}$
        & $0.6550_{\pm0.0077}$
        & $0.6836_{\pm0.0193}$
        & $0.6978_{\pm0.0085}$
        \\ 
        {\texttt{SFGL+GPT(ours)}} 
        & \textcolor{darkred}{${0.6348}_{\pm0.0136}$} 
        & \textcolor{darkred}{${0.6856}_{\pm0.0131}$} 
        & \textcolor{darkred}{${0.6992}_{\pm0.0265}$} 
        & \textcolor{darkred}{${0.7278}_{\pm0.0041}$} 
        & \textcolor{darkred}{${0.7374}_{\pm0.0072}$}
        \\
        \midrule
        \texttt{Pubmed}
        & $0.30\%_{(60)}$ & $0.61\%_{(120)}$ & $0.91\%_{(180)}$ & $1.83\%_{(360)}$ & $2.54\%_{(500)}$ 
        \\ 
        \midrule
        \texttt{GCN} %\citep{kipf2016semi}
        & $0.6770_{\pm0.0217}$ 
        & $0.7022_{\pm0.0107}$ 
        & $0.7398_{\pm0.0096}$ 
        & $0.7388_{\pm0.0041}$ 
        & $0.7464_{\pm0.0100}$ 
        \\ 
        \texttt{DeBERTa} %\citep{he2021deberta}
        & $0.3616_{\pm0.0976}$
        & $0.4114_{\pm0.0532}$ 
        & $0.4402_{\pm0.0571}$ 
        & $0.5732_{\pm0.0363}$ 
        & $0.7474_{\pm0.0832}$ 
        \\ 
        \texttt{GCN+DeBERTa} %\citep{GLEM}
        & $0.4702_{\pm0.0112}$
        & $0.5046_{\pm0.0180}$
        & $0.5520_{\pm0.0477}$
        & $0.6648_{\pm0.0453}$
        & $0.8042_{\pm0.0737}$        
        \\
        {\texttt{SFGL(ours)}}         
        & \textcolor{darkblue}{${0.7238}_{\pm0.0124}$} 
        & \textcolor{darkblue}{${0.7792}_{\pm0.0263}$} 
        & \textcolor{darkblue}{${0.8188}_{\pm0.0190}$} 
        & \textcolor{black}{$0.8524_{\pm0.0280}$} 
        & \textcolor{black}{$0.8390_{\pm0.0204}$} 
        \\
        \texttt{DeBERTa+GPT} %\citep{chatgpt}
        & $0.4514_{\pm0.0700}$ 
        & $0.5206_{\pm0.1020}$ 
        & $0.6196_{\pm0.1403}$ 
        & \textcolor{darkred}{${0.9278}_{\pm0.0034}$}
        & \textcolor{darkblue}{${0.9274}_{\pm0.0032}$}
        \\ 
        \texttt{GCN+DeBERTa+GPT} %\citep{he2023harnessing}
        & \textcolor{black}{$0.7386_{\pm0.0129}$}
        & $0.7440_{\pm0.0272}$
        & $0.7868_{\pm0.0212}$
        & \textcolor{darkblue}{${0.9242}_{\pm0.0049}$}
        & \textcolor{darkred}{${0.9282}_{\pm0.0022}$}
        \\ 
        {\texttt{SFGL+GPT(ours)}} 
        & \textcolor{darkred}{${0.8640}_{\pm0.0381}$} 
        & \textcolor{darkred}{${0.9014}_{\pm0.0064}$} 
        & \textcolor{darkred}{${0.9088}_{\pm0.0037}$} 
        & \textcolor{black}{$0.9162_{\pm0.0059}$} 
        & \textcolor{black}{$0.9116_{\pm0.0027}$}
        \\ 
        \midrule
        \texttt{ogbn-arxiv}
         & $0.03\%_{(50)}$ & $0.06\%_{(100)}$ & $0.09\%_{(150)}$ & $0.18\%_{(300)}$ & $0.30\%_{(500)}$ 
        \\ 
        \midrule
        \texttt{GCN} %\citep{kipf2016semi} 
        & \textcolor{darkblue}{${0.3375}_{\pm0.0360}$} 
        & $0.3578_{\pm0.0179}$ 
        & $0.3991_{\pm0.0214}$ 
        & $0.4227_{\pm0.0132}$ 
        & $0.4516_{\pm0.0087}$  
        \\ 
        \texttt{DeBERTa} %\citep{he2021deberta}
        & $---$ %$0.0165_{\pm0.0027}$ 
        & $---$ %$0.0555_{\pm0.0895}$ 
        & $---$ %$0.0172_{\pm0.0023}$ 
        & $---$ %$0.0586_{\pm0.0000}$ 
        & $0.2541_{\pm0.0636}$  
        \\ 
        \texttt{GCN+DeBERTa} %\citep{GLEM}
        & $0.1661_{\pm0.0584}$
        & $0.1903_{\pm0.0523}$
        & $0.1964_{\pm0.0464}$
        & $0.2130_{\pm0.0169}$
        & $0.2359_{\pm0.0240}$
        \\
        {\texttt{SFGL(ours)}}         
        & $0.3251_{\pm0.0373}$ 
        & \textcolor{darkblue}{${0.4171}_{\pm0.0275}$} 
        & \textcolor{darkblue}{${0.4626}_{\pm0.0379}$} 
        & \textcolor{darkblue}{${0.4979}_{\pm0.0238}$} 
        & \textcolor{darkblue}{${0.5577}_{\pm0.0060}$}  
        \\ 
        \texttt{DeBERTa+GPT} %\citep{chatgpt}
        & $---$ %$0.0323_{\pm0.0352}$ 
        & $---$ %$0.0143_{\pm0.0052}$ 
        & $---$ %$0.0245_{\pm0.0195}$ 
        & $0.1653_{\pm0.0575}$ 
        & $0.2892_{\pm0.0182}$  
        \\      
        \texttt{GCN+DeBERTa+GPT} %\citep{he2023harnessing}
        & $0.2137_{\pm0.0317}$
        & $0.2200_{\pm0.0449}$
        & $0.2188_{\pm0.0709}$
        & $0.2321_{\pm0.0929}$
        & $0.2644_{\pm0.0185}$ 
        \\ 
        {\texttt{SFGL+GPT(ours)}} 
        & \textcolor{darkred}{${0.4570}_{\pm0.0625}$} 
        & \textcolor{darkred}{${0.6007}_{\pm0.0172}$} 
        & \textcolor{darkred}{${0.6380}_{\pm0.0159}$} 
        & \textcolor{darkred}{${0.6561}_{\pm0.0145}$} 
        & \textcolor{darkred}{${0.6567}_{\pm0.0182}$}  
        \\ 
        \midrule
         \texttt{arxiv23} 
         & $0.11\%_{(50)}$ & $0.22\%_{(100)}$ & $0.32\%_{(150)}$ & $0.65\%_{(300)}$ & $1.08\%_{(500)}$ 
         \\
         \midrule
        \texttt{GCN} %\citep{kipf2016semi} 
        & $0.3809_{\pm0.0251}$
        & $0.4229_{\pm0.0137}$ 
        & $0.4342_{\pm0.0092}$
        & $0.4771_{\pm0.0076}$
        & $0.4976_{\pm0.0093}$
        \\ 
        \texttt{DeBERTa} %\citep{he2021deberta}
        & $---$
        & $---$ 
        & $0.2204_{\pm0.0148}$
        & $0.2972_{\pm0.0339}$
        & $0.5619_{\pm0.0198}$
        \\ 
        \texttt{GCN+DeBERTa} %\citep{GLEM}
        & $0.2506_{\pm0.0146}$
        & $0.2659_{\pm0.0102}$ 
        & $0.2648_{\pm0.0067}$
        & $0.2803_{\pm0.0040}$
        & $0.2976_{\pm0.0163}$
        \\
        {\texttt{SFGL(ours)}}     
        & \textcolor{darkblue}{${0.4323}_{\pm0.0259}$} 
        & \textcolor{darkblue}{${0.5002}_{\pm0.0125}$} 
        & \textcolor{darkblue}{${0.5160}_{\pm0.0127}$}
        & \textcolor{darkblue}{${0.5699}_{\pm0.0036}$}
        & \textcolor{black}{$0.5973_{\pm0.0080}$}
        \\
        \texttt{DeBERTa+GPT} %\citep{chatgpt} 
        & $---$
        & $---$ 
        & $0.2223_{\pm0.0156}$
        & $0.3967_{\pm0.0417}$
        & \textcolor{darkblue}{${0.6137}_{\pm0.0497}$}
        \\ 
        %\texttt{TAPE} 
        \texttt{GCN+DeBERTa+GPT} %\citep{he2023harnessing}
        & $0.3261_{\pm0.0389}$  
        & $0.3474_{\pm0.0404}$ 
        & $0.3851_{\pm0.0123}$
        & $0.3974_{\pm0.0307}$
        & $0.4686_{\pm0.0199}$
        \\
        {\texttt{SFGL+GPT(ours)}}   
        & \textcolor{darkred}{${0.4953}_{\pm0.0066}$}  
        & \textcolor{darkred}{${0.5811}_{\pm0.0167}$} 
        & \textcolor{darkred}{${0.6041}_{\pm0.0164}$}
        & \textcolor{darkred}{${0.6439}_{\pm0.0133}$}
        & \textcolor{darkred}{${0.6552}_{\pm0.0247}$}
        \\ 
        \bottomrule
    \end{tabular}}
       \vspace{-2mm}
    \label{mainresults}
\end{table*}

\subsection{Experimental Settings}
\label{Experiment_setting}
\textbf{Datasets}. We conduct extensive experiments on four citation networks: \texttt{Cora} \citep{Cora}, \texttt{Pubmed} \citep{kipf2016semi}, \texttt{ogbn-arxiv} \citep{arxiv}, and \texttt{arxiv23} \citep{he2023harnessing} (see Appendix \ref{datasetdescription} for more details.).
Following standard practice, we utilize traditional text encoders to process textual attributes for graph generation. 
For \texttt{Cora}, each publication is represented using a bag-of-words model~\citep{bow} with a vocabulary of $1,433$ unique words. For \texttt{Pubmed}, documents are encoded as TF/IDF weighted word vectors~\citep{AIZAWA200345} from a dictionary of $500$ unique words. For \texttt{ogbn-arxiv} and \texttt{arxiv23}, textual features are extracted using a skip-gram model \citep{skipgram} and a word2vec model \citep{skipgram}, respectively. 
Additional comparisons with other text encoders are provided in Sec. \ref{ablation}.
In our experiments, the learning models have access to node attributes, while edge attributes remain unavailable. 

\textbf{Baselines}.
We compare our approach against several state-of-the-art methods.
\circlednum{1} \texttt{GCN} \citep{kipf2016semi}: trains a GCN using hand-crafted shallow features $\{\mathbf{f}_i\}^{m+n}$ and labels $\{\mathbf{y}_i\}^m$.
\circlednum{2} \texttt{DeBERTa} \citep{he2021deberta}: finetunes a pretrained \texttt{DeBERTa} on raw text attributes $\{\mathbf{x}_i\}^m$ and labels $\{\mathbf{y}_i\}^m$.
\circlednum{3} \texttt{GCN+DeBERTa} \citep{GLEM}: trains a GCN using text embeddings $\{\mathbf{e}_i\}^{m+n}$ and labels $\{\mathbf{y}_i\}^m$, where $\{\mathbf{e}_i\}^{m+n}$ are obtained by finetuning \texttt{DeBERTa} on $\{\mathbf{x}_i, \mathbf{y}_i\}^m$. 
\circlednum{4} \texttt{DeBERTa+GPT} \citep{chatgpt}: finetunes a \texttt{DeBERTa} using \texttt{GPT}-generated responses $\{\mathbf{r}_i\}^m$ of labeled samples and labels $\{\mathbf{y}_i\}^m$.
\circlednum{5} \texttt{GCN+DeBERTa+GPT} \citep{he2023harnessing}: trains a GCN using text embeddings $\{\mathbf{e}_i\}^{m+n}$ and labels $\{\mathbf{y}_i\}^m$, where $\{\mathbf{e}_i\}^{m+n}$ are obtained by finetuning a \texttt{DeBERTa} on $\{\mathbf{r}_i, \mathbf{y}_i\}^m$.
\circlednum{6} \texttt{SFGL} (ours): trains a GCN using text embeddings $\{\mathbf{e}_i\}^{m+n}$ and labels $\{\mathbf{y}_i\}^m$, where $\{\mathbf{e}_i\}^{m+n}$ are obtained by finetuning a \texttt{DeBERTa} with pseudo-labels $\{\hat{\mathbf{y}}_j\}^n$. 
\circlednum{7} \texttt{SFGL+GPT} (ours): similar to \texttt{SFGL} but uses \texttt{GPT}-generated responses to finetune the \texttt{DeBERTa}.

\subsection{Comparison Analysis}
Table~\ref{mainresults} presents the classification performance of various methods across different datasets and labeling rates. 
Our key observations are as follows:
\circlednum{1} Across all datasets, classification accuracy generally improves as the labeling rate increases. This aligns with expectations, as more labeled data provides better supervision, enhancing model performance.
\circlednum{2} The baselines \texttt{DeBERTa} and \texttt{DeBERTa+GPT} perform poorly at low labeling rates, particularly on \texttt{ogbn-arxiv} and \texttt{arxiv23}, where finetuning LMs becomes  unreliable under extreme label scarcity. 
This is because insufficient labeled data fails to provide sufficient information for gradient updates, leading to the learning of overly narrow patterns, which impairs generalization.
However, incorporating additional pseudo-labels enables LMs to generate informative text embeddings, thereby enhancing performance. 
\circlednum{3}  Incorporating GPT-generated responses significantly enhances both \texttt{GCN} and \texttt{DeBERTa}, particularly on the \texttt{Pubmed} dataset. This highlights the potential of LLMs to extract and leverage rich semantic information for text modeling.
\circlednum{4} In most cases, the proposed \texttt{SFGL} and \texttt{SFGL+GPT} achieve the highest performance, demonstrating their effectiveness in low-supervision scenarios.

\begin{table*}[!tb]
    \footnotesize
    \caption{Performance comparison of the proposed method with and without iterative training. }
    \centering
    \setlength{\tabcolsep}{1.35mm}{
    \begin{tabular}{l|ccccc}
        \toprule
        Labeling Rates (labels)
        & $0.30\%_{(60)}$ & $0.61\%_{(120)}$ & $0.91\%_{(180)}$ & $1.83\%_{(360)}$ & $2.54\%_{(500)}$ 
        \\ 
        \midrule
        \texttt{SFGL} 
        & \textcolor{black}{$0.7238_{\pm0.0124}$} 
        & \textcolor{black}{$0.7792_{\pm0.0263}$} 
        & \textcolor{black}{$0.8188_{\pm0.0190}$} 
        & \textcolor{black}{$0.8524_{\pm0.0280}$} 
        & \textcolor{black}{$0.8390_{\pm0.0204}$}
        \\ 
        \texttt{SFGL+\underline{GCN}}
        & \textcolor{black}{$0.7434_{\pm0.0048}$} 
        & \textcolor{black}{$0.8510_{\pm0.0096}$}  
        & \textcolor{black}{$0.8686_{\pm0.0263}$} 
        & \textcolor{black}{$0.8976_{\pm0.0084}$} 
        & \textcolor{black}{$0.8966_{\pm0.0070}$} 
        \\ 
        \texttt{SFGL+\uwave{GCN}}
        & \textcolor{black}{$0.8912_{\pm0.0293}$} 
        & \textcolor{black}{$0.8974_{\pm0.0116}$}  
        & \textcolor{black}{$0.9110_{\pm0.0083}$} 
        & \textcolor{black}{$0.9066_{\pm0.0094}$} 
        & \textcolor{black}{$0.9186_{\pm0.0073}$} 
        \\ 
        \texttt{SFGL+GPT}
        & \textcolor{black}{$0.8640_{\pm0.0381}$} 
        & \textcolor{black}{$0.9014_{\pm0.0064}$} 
        & \textcolor{black}{$0.9088_{\pm0.0037}$} 
        & \textcolor{black}{$0.9162_{\pm0.0059}$} 
        & \textcolor{black}{$0.9116_{\pm0.0027}$} 
        \\ 
        \texttt{SFGL+GPT+\underline{GCN}}
        & \textcolor{darkred}{${0.9142}_{\pm0.0086}$} 
        & \textcolor{darkred}{${0.9250}_{\pm0.0043}$}  
        & \textcolor{darkred}{${0.9294}_{\pm0.0040}$} 
        & \textcolor{darkred}{${0.9290}_{\pm0.0032}$} 
        & \textcolor{darkred}{${0.9374}_{\pm0.0015}$} 
        \\ 
        \texttt{SFGL+GPT+\uwave{GCN}}
        & \textcolor{darkblue}{${0.9138}_{\pm0.0029}$} 
        & \textcolor{darkblue}{${0.9198}_{\pm0.0054}$}  
        & \textcolor{darkblue}{${0.9216}_{\pm0.0080}$} 
        & \textcolor{darkblue}{${0.9240}_{\pm0.0046}$} 
        & \textcolor{darkblue}{${0.9264}_{\pm0.0026}$} 
        \\ 
        \bottomrule
    \end{tabular}}
    \label{iterative_performance}
       \vspace{-2mm}
\end{table*}

\begin{table*}[!tb]
    \footnotesize
    %\vspace{-2mm}
    \centering
    \caption{Performance comparison of different models using various distance metrics, including Manhattan (\texttt{M}), Euclidean (\texttt{E}), and Cosine (\texttt{C}) distances.}
    %\vspace{-3mm}
    \setlength{\tabcolsep}{0.95mm}{ 
    \begin{tabular}{l|ccccc}
        \toprule
         {Labeling Rates (labels)}
         & $2.58\%_{(70)}$ & $5.17\%_{(140)}$ & $7.75\%_{(210)}$ & $15.51\%_{(420)}$ & $22.16\%_{(600)}$
        \\ 
        \midrule
        %\midrule
        {\texttt{SFGL(M)}}         
        & \textcolor{black}{${0.5454}_{\pm0.0327}$} 
        & \textcolor{black}{${0.5528}_{\pm0.1459}$} 
        & \textcolor{black}{$0.6206_{\pm0.0101}$} 
        & \textcolor{black}{${0.6810}_{\pm0.0080}$} 
        & \textcolor{black}{${0.7144}_{\pm0.0109}$} 
        \\ 
        {\texttt{SFGL(E)}}         
        & \textcolor{black}{${0.5178}_{\pm0.0271}$} 
        & \textcolor{black}{${0.5524}_{\pm0.1468}$} 
        & \textcolor{black}{$0.6240_{\pm0.0163}$} 
        & \textcolor{black}{${0.6718}_{\pm0.0136}$} 
        & \textcolor{black}{${0.7138}_{\pm0.0141}$} 
        \\ 
        {\texttt{SFGL(C)}}         
        & \textcolor{darkblue}{${0.6086}_{\pm0.0140}$} 
        & \textcolor{darkblue}{${0.6314}_{\pm0.0191}$} 
        & \textcolor{darkblue}{$0.6612_{\pm0.0206}$} 
        & \textcolor{darkblue}{${0.6942}_{\pm0.0101}$} 
        & \textcolor{darkblue}{${0.7204}_{\pm0.0153}$} 
        \\ 
        \cdashline{1-6}
        {\texttt{SFGL+GPT(M)}}         
        & \textcolor{black}{${0.6176}_{\pm0.0291}$} 
        & \textcolor{black}{${0.6564}_{\pm0.0223}$} 
        & \textcolor{black}{$0.6666_{\pm0.0229}$} 
        & \textcolor{black}{${0.6954}_{\pm0.0215}$} 
        & \textcolor{black}{${0.7214}_{\pm0.0144}$} 
        \\ 
        {\texttt{SFGL+GPT(E)}}         
        & \textcolor{black}{${0.6138}_{\pm0.0168}$} 
        & \textcolor{black}{${0.6474}_{\pm0.0132}$} 
        & \textcolor{black}{$0.6588_{\pm0.0110}$} 
        & \textcolor{black}{${0.6932}_{\pm0.0249}$} 
        & \textcolor{black}{${0.7238}_{\pm0.0098}$} 
        \\
        {\texttt{SFGL+GPT(C)}} 
        & \textcolor{darkred}{${0.6348}_{\pm0.0136}$} 
        & \textcolor{darkred}{${0.6856}_{\pm0.0131}$} 
        & \textcolor{darkred}{${0.6992}_{\pm0.0265}$} 
        & \textcolor{darkred}{${0.7278}_{\pm0.0041}$} 
        & \textcolor{darkred}{${0.7374}_{\pm0.0072}$}
        \\ 
        \midrule
        {\texttt{DeBERTa+GCN(M)}}         
        & \textcolor{black}{${0.4956}_{\pm0.0136}$} 
        & \textcolor{black}{${0.5286}_{\pm0.1312}$} 
        & \textcolor{black}{$0.6134_{\pm0.0101}$} 
        & \textcolor{black}{${0.6922}_{\pm0.0101}$} 
        & \textcolor{darkblue}{${0.7282}_{\pm0.0080}$} 
        \\ 
        {\texttt{DeBERTa+GCN(E)}}         
        & \textcolor{black}{${0.4660}_{\pm0.0261}$} 
        & \textcolor{black}{${0.5126}_{\pm0.0182}$} 
        & \textcolor{black}{$0.6194_{\pm0.0109}$} 
        & \textcolor{black}{${0.6882}_{\pm0.0087}$} 
        & \textcolor{black}{${0.7254}_{\pm0.0074}$}  
        \\ 
        {\texttt{DeBERTa+GCN(C)}}
        &\textcolor{darkblue}{$0.6232_{\pm0.0094}$}
        &\textcolor{darkblue}{$0.6616_{\pm0.0077}$}
        &\textcolor{darkblue}{$0.6728_{\pm0.0079}$}
        &\textcolor{darkblue}{$0.7064_{\pm0.0077}$}
        &\textcolor{black}{$0.7214_{\pm0.0096}$}
        \\
        \cdashline{1-6}
        {\texttt{DeBERTa+GPT+GCN(M)}}         
        & \textcolor{black}{${0.5380}_{\pm0.0135}$} 
        & \textcolor{black}{${0.6480}_{\pm0.0034}$} 
        & \textcolor{black}{$0.6656_{\pm0.0122}$} 
        & \textcolor{black}{${0.7092}_{\pm0.0088}$} 
        & \textcolor{black}{${0.7332}_{\pm0.0109}$}  
        \\ 
        {\texttt{DeBERTa+GPT+GCN(E)}}         
        & \textcolor{black}{${0.5268}_{\pm0.0146}$} 
        & \textcolor{black}{${0.6412}_{\pm0.0110}$} 
        & \textcolor{black}{$0.6634_{\pm0.0077}$} 
        & \textcolor{black}{${0.7112}_{\pm0.0131}$} 
        & \textcolor{black}{${0.7318}_{\pm0.0110}$}  
        \\ 
        {\texttt{DeBERTa+GPT+GCN(C)}}   
        & \textcolor{darkred}{$0.6588_{\pm0.0113}$}
        & \textcolor{darkred}{$0.6978_{\pm0.0200}$}
        & \textcolor{darkred}{$0.6962_{\pm0.0079}$}
        & \textcolor{darkred}{$0.7206_{\pm0.0109}$}
        & \textcolor{darkred}{$0.7426_{\pm0.0072}$}
        \\
        \bottomrule
    \end{tabular}}
       \vspace{-1mm}
    \label{distance_metric}
\end{table*}

\begin{table*}[!tb]
    \footnotesize
    \caption{Performance comparison between real graphs and KNN graphs.}
   %\vspace{-3mm}
    \centering
    \setlength{\tabcolsep}{1.4mm}
    \begin{tabular}{l|ccccc}
    \toprule
     \multicolumn{1}{c|}{Labeling Rates (labels)} 
     & $0.11\%_{(50)}$ & $0.22\%_{(100)}$ & $0.32\%_{(150)}$ & $0.65\%_{(300)}$ & $1.08\%_{(500)}$ \\
        \midrule
        \texttt{Real Graph} 
        & $0.3691_{\pm0.0033}$ 
        & \textcolor{darkblue}{${0.4124}_{\pm0.0185}$} 
        & \textcolor{darkred}{${0.4354}_{\pm0.0107}$} 
        & \textcolor{black}{${0.4764}_{\pm0.0116}$} 
        & \textcolor{darkred}{${0.5039}_{\pm0.0071}$}   
        \\ 
        \texttt{KNN Graph k=10}
        & $0.3660_{\pm0.0136}$ 
        & $0.4031_{\pm0.0054}$ 
        & $0.4152_{\pm0.0145}$ 
        & $0.4541_{\pm0.0114}$ 
        & $0.4706_{\pm0.0164}$   
        \\ 
         \texttt{KNN Graph k=20}
        & \textcolor{darkblue}{${0.3809}_{\pm0.0251}$} 
        & \textcolor{darkred}{${0.4229}_{\pm0.0137}$} 
        & \textcolor{darkblue}{$0.4342_{\pm0.0092}$} 
        & \textcolor{darkblue}{$0.4771_{\pm0.0076}$} 
        & $0.4976_{\pm0.0093}$   
        \\ 
        \texttt{KNN Graph k=30}
        & $0.3795_{\pm0.0278}$ 
        & $0.4085_{\pm0.0026}$ 
        & $0.4130_{\pm0.0099}$ 
        & $0.4595_{\pm0.0149}$ 
        & $0.4864_{\pm0.0097}$   
        \\ 
       \texttt{KNN Graph k=50}
        & \textcolor{darkred}{${0.3818}_{\pm0.0149}$} 
        & $0.4081_{\pm0.0081}$ 
        & \textcolor{black}{${0.4203}_{\pm0.0133}$}
        & $0.4679_{\pm0.0194}$ 
        & $0.4857_{\pm0.0207}$   
        \\ 
         \texttt{KNN Graph k=100}
        & $0.3767_{\pm0.0182}$ 
        & $0.4100_{\pm0.0055}$ 
        & $0.4201_{\pm0.0091}$ 
        & \textcolor{darkred}{${0.4778}_{\pm0.0097}$} 
        & \textcolor{darkblue}{${0.5005}_{\pm0.0119}$}  
        \\ 
        \bottomrule
    \end{tabular}
   \vspace{-1mm}
\label{table4}
\end{table*}

\subsection{Ablation Study}
\label{ablation}
\textbf{Iterative Performance}.
As mentioned in Sec. \ref{final_class}, improving the GNNs allows us to generate more accurate pseudo-labels.
These enhanced pseudo-labels, in turn, enable us to retrain the LMs, producing more refined text embeddings (see Appendix \ref{trainingstrategies} for detailed training stages).
To evaluate this iterative training strategy, we introduce two improved GNN variants, \texttt{\underline{GCN}} and \texttt{\uwave{GCN}}, derived from \texttt{SFGL} and \texttt{SFGL+GPT}, respectively, as new pseudo-labelers. 
Table~\ref{iterative_performance} presents the iterative performance of our models on the \texttt{Pubmed} dataset.
Our results demonstrate that this iterative mechanism brings further performance improvements, highlighting the mutually beneficial relationship between GNNs and LMs in iterative training.
Notably, \texttt{SFGL+GPT+\underline{GCN}} consistently outperforms \texttt{SFGL+GPT+\uwave{GCN}}, indicating slight overfitting during the iterative training process.

\textbf{Different Distance Metrics}.
As discussed in Sec. \ref{KNN}, various distance metrics, such as Euclidean, Manhattan, and Cosine distances, can be used in KNN graphs to measure similarity between samples.
To evaluate their impact, we compare model performance using these different metrics. 
Table \ref{distance_metric} presents the results for \texttt{SFGL}, \texttt{SFGL+GPT}, \texttt{DeBERTa+GCN}, and \texttt{DeBERTa+GPT+GCN}. 
The first two models use GCNs as the final classifier, while the latter two employ LMs as the classifier but leverage our graph-based pseudo-labeler to enhance LM finetuning. 
As shown, Cosine distance consistently outperforms the other metrics, particularly at very low labeling rates, further justifying its effectiveness for graph construction in our framework.

\textbf{Real Graph vs. KNN Graph}.
Table~\ref{table4} presents the performance comparison between real graphs and KNN graphs on the \texttt{arxiv23} dataset, considering different labeling rates and values of $k$.
Notably, under extremely low labeling rates, the performance achieved with the true graph is not significantly better—and in some cases, even worse—than that obtained with a pre-constructed KNN graph. 
This suggests that a true graph may not be crucial for improving LM finetuning, particularly in scenarios with very limited supervision.
Our explanation for this phenomenon is that both the number of labeled nodes and the choice of which nodes to label play a crucial role in determining the final classification performance. Consider an extreme case: if all labeled nodes in a real graph are isolated, label information cannot propagate to the unlabeled nodes. In contrast, constructing an artificial graph for these nodes may establish connections between the unlabeled nodes and the isolated labeled nodes, facilitating label propagation across the graph. Consequently,  artificial graphs can potentially outperform real graphs, particularly in scenarios with very limited supervision. 
See Appendix \ref{moreresults} for additional experimental results.

\section{Related Work}

\textbf{Latent Graph Inference Models}. 
Traditional LGI methods adopt linear projections to learn node representations and optimize various objective functions to learn latent graphs.
For example, \cite{zhang2010graph} design an entropy regularization to control the uniformity level of edge weights. \cite{nie2016unsupervised} infer an optimal graph by assigning adaptive neighbors. \cite{lu2018subspace} impose spectral sparsity on the graph Laplacian matrix.
\cite{lu2021target} assume that two samples are likely to belong to the same class if one sample is close to the reconstructed representation of the other.  
Advanced LGI models exploit GNNs to learn the latent graphs. 
For instance, \cite{LDS} model a discrete probability distribution for the edges. 
\cite{SLAPS} provide supplementary supervision for latent graphs through a self-supervision task. 
\cite{jianglin2023LGI} propose a model-agnostic model that obtains supplementary supervision directly from true labels.
However, existing methods typically rely on artificial assumptions about the underlying edge distribution.
For example, \cite{zhang2010graph} impose a uniformity assumption on edge weights.
\cite{lu2018subspace} introduce a block diagonal prior to the graph Laplacian matrix.
\cite{LU2021107758} construct an adaptive neighborhood graph by assuming the probability property of edge weights.
\cite{SLAPS} assume that a graph structure effective for predicting features is also effective for label prediction.
{These artificial assumptions may not accurately reflect real graph structures and require specific optimizations with additional model training across the entire dataset.}

\textbf{Language-Assisted Graph Models}.
LGI models typically rely on feature engineering approaches, such as skip-gram and TF-IDF, to encode textual sequences into feature vectors. 
In contrast, recent LAG models seek to enhance text embeddings by leveraging various LMs to extract richer semantic features from text sequences \citep{li2024surveygraphmeetslarge}.
For example, \cite{GLEM} design a variational expectation-maximization framework to fuse graph structure and language learning for classification.
\cite{duan2023simteg} first conduct parameter-efficient finetuning on a pretrained LM and then generate text embeddings using the finetuned LM. 
\cite{he2023harnessing} leverages an LLM to capture textual information and applies a small LM as the interpreter to transform the responses of the LLM into informative features. 
\cite{yu2023empower} use LLMs to generate nodes and edges for text-attributed graphs, which harnesses LLMs for enhancing class-level information.
For most existing LAG models, finetuning an LM on the target dataset is essential to generate semantically enriched text embeddings. 
However, it is notorious that finetuning a pretrained LM typically demands a large amount of annotated data \citep{GPT3}, which poses a significant challenge for LAG models in semi-supervised learning tasks, where available annotated data is often scarce.

\section{Conclusion}
In this paper, we identify two primary challenges in exiting graph-language models for semi-supervised learning, i.e., artificial structural assumptions in graph generation and unreliable LM finetuning for text embedding. 
We tackle these challenges by establishing a well-grounded structural prior. Specifically, we examine the scale-free property of citation networks and reveal that this structural characteristic can be effectively approximated using a simple KNN graph. Building on this observation, we propose a novel graph-language model that employs a customized KNN algorithm for scale-free graph generation and utilizes a graph-based pseudo-labeler to provide additional supervision for improved text embedding. 
Extensive experiments on representative datasets validate our findings on the scale-free structural approximation of KNN graphs and demonstrate the effectiveness of integrating graph generation and text embedding with a real structural prior.
We hope this study highlights the synergistic potential of GNNs and LMs, providing valuable insights for researchers in both the GNN and NLP communities.

\clearpage

\normalem
\bibliography{iclr2025_conference}
\bibliographystyle{iclr2025_conference}

\clearpage

\appendix
\section{Appendix}

\subsection{Dataset Description}
\label{datasetdescription}
We conducted extensive experiments on four text-attributed graph benchmarks: \texttt{Cora}~\citep{Cora}, \texttt{PubMed}~\citep{kipf2016semi}, \texttt{ogbn-arxiv}~\citep{arxiv}, and \texttt{arxiv23}~\citep{he2023harnessing}.
The \texttt{Cora} dataset contains $2,708$ papers belonging to seven different classes.
The \texttt{PubMed} dataset includes $19,717$ publications categorized into three classes.
The \texttt{ogbn-arxiv}~\citep{arxiv} dataset includes $169,343$ arXiv papers across $40$ subject areas in computer science. Labels for this dataset are manually annotated by the authors and arXiv.
The \texttt{arxiv23} \citep{he2023harnessing} dataset includes all computer science papers submitted in $2023$ or later on arXiv. Similar to ogbn-arxiv, these papers are divided into $40$ classes.

\subsection{Model Configuration}
\label{modelconfiguration}
We follow~\citep{he2023harnessing} for model configurations.
For GNNs, we use a two-layer GCN (see Sec. \ref{ablation} for comparisons with other GNN architectures). The hyper-parameters for hidden dimension, learning rate, dropout ratio, and weight decay are set to $128$, $0.001$, $0.5$, and $0.0005$, respectively.
For LMs, we finetune a pretrained DeBERTa~\citep{he2021deberta} on the target datasets. The batch size, learning rate, and dropout ratio are set to $20$, $2 \times 10^{-5}$, and $0.3$, respectively.
We also employ GPT3.5~\citep{chatgpt} for inference.
We empirically set $k$ to $25, 15, 25, 20$ for \texttt{Cora}, \texttt{Pubmed}, \texttt{ogbn-arxiv}, and \texttt{arxiv23}, respectively.
We conduct five independent experiments with different random seeds and report the average test accuracy along with the standard deviation.

For LLM  prompt inputs, we follow established methods~\citep{he2023harnessing}. Each prompt consists of a paper's title and abstract, along with a task-specific question designed to elicit a class prediction and a corresponding explanation from the LLM.
The generated textual responses by the LLMs  serve as inputs for the LMs, which use them to generate text embeddings for all nodes. These embeddings are then further processed by GNNs  to enhance model performance.

\subsection{Proof of Proposition}
\setcounter{proposition}{0}
\begin{proposition}
For a scale-free network, assuming that there is at most one node $v$ whose degree belongs to $[\theta_{max}, \infty)$, we have $\theta_{max} = \theta_{min} |\mathcal{V}|^{\frac{1}{\alpha-1}}$, where $\theta_{max}$ and $\theta_{min}$ refer to the maximum and minimal degree in the scale-free network.
\end{proposition}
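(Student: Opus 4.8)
The plan is to derive the relation from the continuous power-law approximation of the degree distribution together with a normalization (``natural cutoff'') argument, which is the standard route for scale-free networks. First I would treat the degree $\theta$ as a continuous variable ranging over $[\theta_{min}, \infty)$ with density $p(\theta) = C\,\theta^{-\alpha}$, and fix the constant $C$ by imposing the normalization $\int_{\theta_{min}}^{\infty} C\,\theta^{-\alpha}\,d\theta = 1$. Since the empirically fitted exponents satisfy $\alpha > 1$, this integral converges and yields $C = (\alpha-1)\,\theta_{min}^{\alpha-1}$.

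Next I would translate the hypothesis---that at most one node has degree in $[\theta_{max}, \infty)$---into the statement that the expected number of such nodes equals one. The expected count of nodes with degree at least $\theta_{max}$ is $|\mathcal{V}|\int_{\theta_{max}}^{\infty} p(\theta)\,d\theta$, so I would impose
\begin{equation}
|\mathcal{V}| \int_{\theta_{max}}^{\infty} C\,\theta^{-\alpha}\,d\theta = 1 .
\end{equation}
Evaluating the tail integral gives $\int_{\theta_{max}}^{\infty}\theta^{-\alpha}\,d\theta = \theta_{max}^{1-\alpha}/(\alpha-1)$, and substituting the value of $C$ collapses the $(\alpha-1)$ factors, leaving $|\mathcal{V}|\,(\theta_{min}/\theta_{max})^{\alpha-1} = 1$. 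Rearranging and taking the $(\alpha-1)$-th root then produces $\theta_{max} = \theta_{min}\,|\mathcal{V}|^{1/(\alpha-1)}$, exactly as claimed.

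The main obstacle is conceptual rather than computational: justifying the passage from the discrete degree sequence to the continuous density, and pinning down the precise meaning of the assumption. The cleanest reading is that $\theta_{max}$ is defined so that the \emph{expected} number of nodes above it is one, which is precisely the equation I set up above; the phrase ``at most one node'' should therefore be understood in this mean-field sense. I would accordingly be explicit that the argument is an order-of-magnitude estimate valid in the large-$|\mathcal{V}|$ regime, and I would flag that convergence of both integrals relies on $\alpha > 1$. Once the normalization constant and the tail equation are in place, the remaining algebra is routine.
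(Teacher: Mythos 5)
Your proposal is correct and follows essentially the same route as the paper's proof: normalize the continuous power-law density to get $C = (\alpha-1)\theta_{min}^{\alpha-1}$, then impose that the tail mass beyond $\theta_{max}$ equals $1/|\mathcal{V}|$ and solve. Your explicit framing of the ``at most one node'' hypothesis as an expected-count (natural cutoff) condition is a slightly more careful reading of what the paper does implicitly, but the argument is the same.
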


\begin{proof}
We write the degree distribution $\mathbb{P}(\theta)$ of a scale free network as:
$\mathbb{P}(\theta) = \xi \theta^{-\alpha}$, where $\theta$ is the degree of a node, $\xi$ is a constant, and $\alpha$ is a scaling parameter of the distribution.
Since $\int \theta^{-\alpha} d\theta = \frac{\theta^{1-\alpha}}{1-\alpha} + constant$ and $\int_{\theta_{min}}^{\infty}\mathbb{P}(\theta)d\theta=1$, we have $\frac{1}{\xi} = \int_{\theta_{min}}^{\infty} \theta^{-\alpha}d\theta=[\frac{\theta^{1-\alpha}}{1-\alpha}]_{\theta_{min}}^{\infty}$. 
Since $\lim_{\theta\to \infty} \frac{\theta^{1-\alpha}}{1-\alpha}=0$ $(\alpha > 1)$, 
we have ${[\frac{\theta^{1-\alpha}}{1-\alpha}]}_{\theta_{min}}^{\infty} = \frac{\theta_{min}^{1-\alpha}}{\alpha-1}$. As a result, we obtain $\xi=(\alpha-1)\theta_{min}^{\alpha-1}$ and $\mathbb{P}(\theta) = (\alpha-1)\theta_{min}^{\alpha-1} \theta^{-\alpha}$.
Assume that there is at most one node whose degree belongs to $[\theta_{max}, \infty)$, where $\theta_{max}$ is the maximum degree in the network. We have $\int_{\theta_{max}}^{\infty}\mathbb{P}(\theta)d\theta=\int_{\theta_{max}}^{\infty}\xi \theta^{-\alpha}d\theta=\xi \frac{\theta_{max}^{1-\alpha}}{\alpha-1} =(\alpha-1)\theta_{min}^{\alpha-1} \frac{\theta_{max}^{1-\alpha}}{\alpha-1}=\frac{1}{|\mathcal{V}|}$, where $|\mathcal{V}|$ is the number of nodes. 
After simplification, we obtain $(\frac{\theta_{min}}{\theta_{max}})^{\alpha-1}=\frac{1}{|\mathcal{V}|}$, which leads to $\theta_{max} = \theta_{min} |\mathcal{V}|^{\frac{1}{\alpha-1}}$.
\end{proof}

\begin{proposition}
For a KNN graph, we have $\sum_{v\in \mathcal{V}}C_{ID}(v) = \sum_{v\in \mathcal{V}}C_D(v)  - \sum_{v\in \mathcal{V}}C_{OD}(v) = 2 |\mathcal{E}|  - k |\mathcal{V}| $, where $|\mathcal{E}| \leq k |\mathcal{V}|$.
\label{prop1}
\end{proposition}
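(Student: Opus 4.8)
The plan is to prove the two equalities and the inequality in turn, relying on three elementary facts about the KNN construction: every node has a fixed out-degree, the total degree of a node splits into its in-part and out-part, and the global degree sum is governed by the handshaking lemma. Throughout I would fix one convention for $|\mathcal{E}|$ and carry it uniformly, since the statement mixes the directed quantities $C_{ID}, C_{OD}$ with the (undirected) degree centrality $C_D$.

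First I would establish the per-node decomposition $C_D(v) = C_{ID}(v) + C_{OD}(v)$, i.e. the degree of a node equals its in-degree plus its out-degree, which holds directly from Definitions \ref{Degree} and \ref{def:inj}. Summing this identity over all $v \in \mathcal{V}$ and rearranging immediately yields the first equality $\sum_{v\in\mathcal{V}} C_{ID}(v) = \sum_{v\in\mathcal{V}} C_D(v) - \sum_{v\in\mathcal{V}} C_{OD}(v)$. This step is purely formal once the degree decomposition is in place.

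Next, for the second equality I would use two counting facts. By construction, each node is joined to exactly its $k$ nearest neighbors, so $C_{OD}(v) = \theta_o(v) = k$ for every $v$, giving $\sum_{v\in\mathcal{V}} C_{OD}(v) = k|\mathcal{V}|$; this is precisely the ``out-degree is always fixed as $k$'' fact invoked in Sec. \ref{SFG}. Combining this with the handshaking lemma $\sum_{v\in\mathcal{V}} C_D(v) = 2|\mathcal{E}|$ and substituting into the first equality gives $\sum_{v\in\mathcal{V}} C_{ID}(v) = 2|\mathcal{E}| - k|\mathcal{V}|$. For the bound $|\mathcal{E}| \le k|\mathcal{V}|$ I would argue by edge counting: the $|\mathcal{V}|$ nodes emit exactly $k|\mathcal{V}|$ directed neighbor relations in total, and reciprocated relations collapse to a single undirected edge, so the number of edges cannot exceed $k|\mathcal{V}|$.

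The main obstacle is the bookkeeping between the directed KNN graph, where $C_{ID}$ and $C_{OD}$ are defined, and the undirected degree count $C_D$ together with the edge count $|\mathcal{E}|$: reciprocal neighbor pairs (where $u$ lists $v$ and $v$ lists $u$) must be handled consistently so that the handshaking lemma and the out-degree sum refer to the same underlying edge set. I would resolve this by declaring at the outset whether $|\mathcal{E}|$ counts directed or symmetrized edges and keeping that choice fixed; with this convention the three relations close up, and the inequality $|\mathcal{E}| \le k|\mathcal{V}|$ is tight exactly when no reciprocal edges are present.
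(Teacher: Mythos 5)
Your proposal takes essentially the same route as the paper's proof: the per-node decomposition $C_D(v) = C_{ID}(v) + C_{OD}(v)$, summation over $\mathcal{V}$, the fixed out-degree $C_{OD}(v) = k$, the handshaking lemma $\sum_{v} C_D(v) = 2|\mathcal{E}|$, and the observation that reciprocal nearest-neighbor pairs collapse to a single edge, giving $|\mathcal{E}| \leq k|\mathcal{V}|$. The only difference is that you explicitly flag the directed-versus-symmetrized edge-counting convention, which the paper's proof uses implicitly; this is a careful refinement of the same argument rather than a different approach.
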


\begin{proof}
In a KNN graph, we have $C_D(v)=C_{ID}(v)+C_{OD}(v)$, and $\sum_{v\in \mathcal{V}}C_D(v) = 2 |\mathcal{E}|$. For $\forall v \in \mathcal{V}$, $C_{OD}(v)=k$.
Considering that two nodes may regard each other as its nearest neighbors, the total number of edges is at most $k |\mathcal{V}|$, i.e., $|\mathcal{E}| \leq k |\mathcal{V}|$.
Based on the above points, we can derive: $\sum_{v\in \mathcal{V}}C_{ID}(v) = \sum_{v\in \mathcal{V}}C_D(v)  - \sum_{v\in \mathcal{V}}C_{OD}(v) = 2 |\mathcal{E}|  - k |\mathcal{V}| $, where $|\mathcal{E}| \leq k |\mathcal{V}|$.
\end{proof}

\subsection{Additional Experimental Results}
\label{moreresults}

\begin{table*}[!tb]
    \footnotesize
    \caption{Performance comparison of different models using different types of GNNs.}
    \centering
    \setlength{\tabcolsep}{1.3mm}{ 
    \begin{tabular}{l|ccccc}
        \toprule
         \multicolumn{1}{c|}{Labeling Rates (labels)} 
         & $2.58\%_{(70)}$ & $5.17\%_{(140)}$ & $7.75\%_{(210)}$ & $15.51\%_{(420)}$ & $22.16\%_{(600)}$
        \\ 
        \midrule      
        \texttt{GCN} %\cite{kipf2016semi}
        & $0.5898_{\pm0.0159}$ 
        & $0.6038_{\pm0.0116}$ 
        & $0.6358_{\pm0.0164}$ 
        & $0.6936_{\pm0.0055}$ 
        & $0.7114_{\pm0.0049}$ 
        \\ 
        \texttt{SAGE} %\cite{kipf2016semi}
        & $0.5660_{\pm0.0109}$ 
        & $0.5994_{\pm0.0143}$ 
        & $0.6326_{\pm0.0150}$  
        & $0.6910_{\pm0.0106}$ 
        & $0.7044_{\pm0.0153}$ 
        \\ 
        \texttt{GAT} %\cite{kipf2016semi}
        & $0.6050_{\pm0.0052}$ 
        & $0.6262_{\pm0.0074}$  
        & $0.6560_{\pm0.0075}$ 
        & $0.6852_{\pm0.0092}$ 
        & $0.6944_{\pm0.0174}$ 
        \\ 
        \cdashline{1-6}
        {\texttt{SFGL(GCN)}}         
        & \textcolor{black}{${0.6086}_{\pm0.0140}$} 
        & \textcolor{black}{${0.6314}_{\pm0.0191}$} 
        & \textcolor{black}{$0.6612_{\pm0.0206}$} 
        & \textcolor{black}{${0.6942}_{\pm0.0101}$} 
        & \textcolor{black}{${0.7204}_{\pm0.0153}$} 
        \\ 
        {\texttt{SFGL(SAGE)}} %\cite{kipf2016semi}
        & $0.5794_{\pm0.0103}$ 
        & $0.6502_{\pm0.0100}$ 
        & $0.6706_{\pm0.0072}$ 
        & $0.7056_{\pm0.0135}$ 
        & $0.7234_{\pm0.0150}$ 
        \\ 
        {\texttt{SFGL(GAT)}} %\cite{kipf2016semi}
        & $0.5974_{\pm0.0117}$ 
        & $0.6586_{\pm0.0099}$ 
        & $0.6998_{\pm0.0079}$ 
        & $0.7148_{\pm0.0200}$ 
        & $0.7196_{\pm0.0081}$ 
        \\ 
        {\texttt{SFGL+GPT(GCN)}} 
        & \textcolor{darkblue}{${0.6348}_{\pm0.0136}$} 
        & \textcolor{darkblue}{${0.6856}_{\pm0.0131}$} 
        & \textcolor{black}{${0.6992}_{\pm0.0265}$} 
        & \textcolor{darkred}{${0.7278}_{\pm0.0041}$} 
        & \textcolor{darkblue}{${0.7374}_{\pm0.0072}$}
        \\ 
        {\texttt{SFGL+GPT(SAGE)}} %\cite{kipf2016semi}
        & $0.6342_{\pm0.0153}$ 
        & $0.6806_{\pm0.0142}$ 
        & \textcolor{darkblue}{$0.7050_{\pm0.0080}$} 
        & \textcolor{darkblue}{$0.7276_{\pm0.0162}$} 
        & $0.7348_{\pm0.0110}$ 
        \\ 
        {\texttt{SFGL+GPT(GAT)}}
        %\cite{kipf2016semi}
        & \textcolor{darkred}{$0.6390_{\pm0.0223}$} 
        & \textcolor{darkred}{$0.6954_{\pm0.0136}$} 
        & \textcolor{darkred}{$0.7128_{\pm0.0199}$} 
        & $0.7256_{\pm0.0114}$ 
        & \textcolor{darkred}{$0.7390_{\pm0.0076}$} 
        \\ 
        \midrule
        {\texttt{DeBERTa+GCN}}
        &\textcolor{black}{$0.6232_{\pm0.0094}$}
        &\textcolor{black}{$0.6616_{\pm0.0077}$}
        &\textcolor{black}{$0.6728_{\pm0.0079}$}
        &\textcolor{black}{$0.7064_{\pm0.0077}$}
        &\textcolor{black}{$0.7214_{\pm0.0096}$}
        \\
        {\texttt{DeBERTa+SAGE}}
        & $0.5882_{\pm0.0081}$ 
        & $0.6396_{\pm0.0100}$ 
        & $0.6652_{\pm0.0090}$ 
        & $0.7122_{\pm0.0074}$ 
        & $0.7224_{\pm0.0091}$ 
        \\
        {\texttt{DeBERTa+GAT}}
        & $0.6112_{\pm0.0057}$ 
        & $0.6530_{\pm0.0104}$ 
        & \textcolor{darkblue}{$0.6986_{\pm0.0128}$} 
        & $0.7074_{\pm0.0132}$ 
        & $0.7186_{\pm0.0065}$ 
        \\
        {\texttt{DeBERTa+GPT+GCN}}   
        & \textcolor{darkred}{$0.6588_{\pm0.0113}$}
        & \textcolor{darkred}{$0.6978_{\pm0.0200}$}
        & \textcolor{black}{$0.6962_{\pm0.0079}$}
        & \textcolor{darkred}{$0.7206_{\pm0.0109}$}
        & \textcolor{darkred}{$0.7426_{\pm0.0072}$}
        \\
        {\texttt{DeBERTa+GPT+SAGE}}
        & \textcolor{darkblue}{$0.6500_{\pm0.0113}$} 
        & \textcolor{darkblue}{$0.6918_{\pm0.0051}$} 
        & $0.6914_{\pm0.0122}$ 
        & \textcolor{darkblue}{$0.7166_{\pm0.0063}$} 
        & \textcolor{darkblue}{$0.7312_{\pm0.0060}$} 
        \\
        {\texttt{DeBERTa+GPT+GAT}}
        & $0.6482_{\pm0.0103}$ 
        & $0.6910_{\pm0.0033}$ 
        & \textcolor{darkred}{$0.7114_{\pm0.0066}$} 
        & $0.7118_{\pm0.0059}$ 
        & $0.7294_{\pm0.0051}$ 
        \\
        \bottomrule
    \end{tabular}}
       \vspace{-1mm}
    \label{GNN_Model}
\end{table*}

As mentioned in Sec. \ref{TFE}, various GNNs can be used to implement the pseudo-labeler. 
To assess their impact, we evaluate model performance using different GNNs, including \texttt{GCN}, \texttt{SAGE}, and \texttt{GAT}.
Table \ref{GNN_Model} presents the comparison results across these variants. 
Our findings indicate that the proposed models exhibit robustness to the choice of GNN, as they achieve similar performance improvements regardless of the specific architecture used.

In Sec. \ref{Experiment_setting}, our experiments employ traditional text encoders for graph construction and pseudo-labeling. 
In fact, a variety of text encoders can be utilized for this task. To evaluate their impact, we experiment with several text encoders, including a traditional TF-IDF encoder (\texttt{Enc1}), a pretrained DeBERTa (\texttt{Enc2}), a pretrained E5-Mistral-7B-Instruct (\texttt{Enc3}) \citep{wang2024improvingtextembeddingslarge},  a pretrained bge-en-icl (\texttt{Enc4}) \citep{li2025making}, and a finetuned DeBERTa (\texttt{Enc5}). 
Table \ref{text_encoders} presents the comparison results on the \texttt{PubMed} dataset.
\begin{wraptable}{r}{0.53\textwidth}
    \centering   
    \vspace{-2mm}
    \footnotesize
    \caption{Results across different text encoders.}
    \vspace{1mm}
    \begin{tabular}{l|ccccc}
        \toprule
        Labels  
        & $60$ & $120$ & $180$ & $360$ & $500$ \\
        \midrule
        \texttt{Enc1} % TF-IDF Encoder
        & $0.665$ & $0.696$ & $0.748$ & $0.749$ & $0.758$ \\
        \texttt{Enc2} %Pretrained DeBERTa
        & $0.475$ & $0.518$ & $0.534$ & $0.538$ & $0.548$ \\
        \texttt{Enc3} %Pretrained E5-Mistral-7B-Instruct	 
        & $0.550$ & $0.614$ & $0.648$ & $0.686$ & $0.690$ \\
        \texttt{Enc4} %Pretrained bge-en-icl 
        & $0.596$ & $0.636$ & $0.648$ & $0.652$ & $0.667$ \\
        \texttt{Enc5}  % Finetuned DeBERTa
        & $0.743$ & $0.815$ & $0.856$ & $0.877$ & $0.876$ \\
        \bottomrule
    \end{tabular}
    \label{text_encoders}
\end{wraptable}
As observed, pretrained models such as DeBERTa, E5-Mistral-7B-Instruct, and bge-en-icl fail to achieve satisfactory results. In contrast, the fine-tuned DeBERTa model demonstrates significant improvement, with TF-IDF performance falling between that of the pretrained and finetuned models.
However, finetuning DeBERTa requires a sufficient number of labels, which is why we initially use shallow text encoders to facilitate pseudo-label generation.
Notably, the two variants of our proposed SFGL listed in Table \ref{iterative_performance} use the finetuned DeBERTa as their text encoder. As shown, incorporating a more refined text encoder leads to further performance gains.
Additionally, both pretrained E5-Mistral-7B-Instruct and bge-en-icl outperform the pretrained DeBERTa model but fall short of the finetuned DeBERTa. It is reasonable to infer that a finetuned E5-Mistral-7B-Instruct or bge-en-icl model would surpass the finetuned DeBERTa in performance.

Figures \ref{in_degree_distribution_a1} and \ref{fitting_curves_a1} illustrate the degree distribution and fitting curves for $k=20$. As shown, the degree distributions of KNN graphs with a larger $k$ still approximate a scale-free structure. 
We further analyze the KNN graph on the \texttt{Cora} dataset using Euclidean distance as the similarity metric. As depicted in Fig. \ref{other_distribution} (b), the degree distribution of the KNN graph ($k=5$) constructed with Euclidean distance does not follow a power law pattern. Notably, $2,080$ nodes have zero in-degrees, meaning they are not among the top $5$ nearest neighbors of any nodes. This phenomenon likely arises from the way text embeddings encode semantic and syntactic information, which simple Euclidean distance fails to capture effectively. 
In other words, raw Euclidean distances between embedding vectors do not adequately reflect semantic similarities.
This is because Euclidean distance primarily measures linear separation between points while disregarding the angular relationships or vector orientation, which are key factors in capturing semantic meaning.

Besides KNN graphs, our training framework can incorporate any LGI method to generate pseudo-labels. However, unlike KNN graphs, existing LGI methods typically require specialized optimization strategies with complex constraints and additional training on the entire dataset, leading to increased computational and model complexity.
Moreover, ensuring that graphs generated by LGI methods exhibit a scale-free structure is challenging. 
For instance, SLAPS \citep{SLAPS} requires several hours of training to generate graphs, whereas KNN graphs can be constructed in just a few minutes. Additionally, as shown in Fig. \ref{other_distribution} (a), the graph produced by SLAPS does not exhibit scale-free properties, further highlighting the efficiency and structural advantages of KNN graphs.

Additionally, we evaluate the performance of our models using the full set of available labels on the \texttt{Cora} and \texttt{PubMed} datasets.
According to the standard data split, there are $1,068$ labeled samples in \texttt{Cora} and $18,157$ labeled samples in \texttt{PubMed} that remain unutilized during training. To assess the impact of complete supervision, we incorporate these additional labels, expanding the training sets to $1,208$ and $18,217$ labeled nodes for \texttt{Cora} and \texttt{PubMed}, respectively.
Table \ref{complete_lables} presents the results comparing the standard and complete label settings. With full supervision, the standalone DeBERTa model achieves strong performance, eliminating the need for pseudo-label generation, as the LMs already produce high-quality text embeddings. However, under the standard split with limited labeled nodes, generating pseudo-labels remains crucial for enhancing performance.

\begin{algorithm}[!tb]
\caption{Scale-Free Graph-Language Model (\texttt{SFGL} / \texttt{SFGL+GPT})}
\begin{algorithmic}[1]
\Require Input sequences $\mathbf{X}$, labels $\mathbf{Y}_L$, the number of neighbors $k$, query question $\mathbf{q}$, prompt $\mathbf{p}_i$.
\State Encode the text sequences of nodes into hand-crafted features: $\mathbf{F}$ = \texttt{Enc}($\mathbf{X}$).
\State Construct a KNN graph with Cosine distance metric: $\mathbf{A}=\texttt{KNN}_{\texttt{cos}}(\mathbf{F}, k)$.
\State Train a GNN ($\texttt{GNN}_{\mathbf{\Phi}}$) on $\mathbf{F}$, $\mathbf{A}$, and $\mathbf{Y}_L$ with loss function in Eq. \ref{gnn_train}. 
\State Generate pseudo labels for the unlabeled nodes:     $\hat{\mathbf{y}}_j = \operatorname{argmax}(\texttt{GNN}_{\mathbf{\Phi}}(\mathbf{f}_j, \mathbf{A}))$, $j \in \{1, \cdots, n\}$.
\State (Optional) Generate textual responses from an LLM : $\mathbf{r}_i = \texttt{LLM}(\mathbf{p}_i), \  \mathbf{p}_i=\operatorname{concat}(\mathbf{x}_i, \mathbf{q})$.
\State Finetune a LM $\texttt{LM}_{\mathbf{\Theta}}$ using loss function in Eq. \ref{lm_finetune}. 
\State Generate text embeddings: $\mathbf{e}_i=\texttt{LM}_{\mathbf{\Theta}}(\mathbf{x}_i)$ (If LLM is used, replace $\mathbf{x}_i$ with $\mathbf{r}_i$).
\State Train another GNN ($\texttt{GNN}_{\mathbf{\hat{\Phi}}}$) on $\mathbf{E}=[\mathbf{e}_1, \ldots, \mathbf{e}_{m+n}]$, $\mathbf{A}$, and $\mathbf{Y}_L$ with loss function in Eq. \ref{gnn_train}.
\State Classify unlabeled nodes using $\texttt{GNN}_{\mathbf{\hat{\Phi}}}$: $\overline{\mathbf{y}}_j = \operatorname{argmax}(\texttt{GNN}_{\mathbf{\hat{\Phi}}}(\mathbf{e}_j, \mathbf{A}))$, $j \in \{1, \cdots, n\}$.
\State \Return $\mathbf{Y}_U=[\overline{\mathbf{y}}_1, \cdots, \overline{\mathbf{y}}_n]$.
\end{algorithmic}
\label{algorithm1}
\end{algorithm}

\begin{table*}[!tb]
    \footnotesize
    \caption{Performance comparison of different models using complete labels.}
    \centering
    \setlength{\tabcolsep}{1.6mm}{ 
    \begin{tabular}{l|cccc}
        \toprule
        Dataset  
        & \texttt{Cora(140)} & \texttt{Cora(1208)} & \texttt{PubMed(60)} & \texttt{PubMed(18217)}
        \\ 
        \midrule
        \texttt{GCN} %\cite{kipf2016semi}
        & \textcolor{black}{${0.6038}_{\pm0.0116}$} 
        & \textcolor{black}{${0.7182}_{\pm0.0055}$} 
        & \textcolor{black}{${0.6770}_{\pm0.0217}$} 
        & \textcolor{black}{${0.8092}_{\pm0.0057}$} 
        \\ 
        {\texttt{DeBERTa}} 
        & \textcolor{black}{${0.3048}_{\pm0.0198}$} 
        & \textcolor{black}{${0.7366}_{\pm0.0137}$} 
        & \textcolor{black}{${0.3616}_{\pm0.0976}$} 
        & \textcolor{black}{${0.9510}_{\pm0.0056}$} 
        \\ 
        {\texttt{DeBERTa+GPT}} 
       & \textcolor{black}{${0.5450}_{\pm0.1121}$} 
        & \textcolor{black}{${0.7430}_{\pm0.0067}$} 
        & \textcolor{black}{${0.4514}_{\pm0.0700}$} 
        & \textcolor{black}{${0.9388}_{\pm0.0020}$} 
        \\ 
        {\texttt{SFGL}}
        & \textcolor{black}{${0.6314}_{\pm0.0191}$} 
        & \textcolor{black}{${0.7278}_{\pm0.0151}$} 
        & \textcolor{black}{${0.7238}_{\pm0.0124}$} 
        & \textcolor{black}{${0.9382}_{\pm0.0078}$} 
        \\ 
        {\texttt{SFGL+GPT}}
        & \textcolor{black}{${0.6856}_{\pm0.0131}$} 
        & \textcolor{black}{${0.7396}_{\pm0.0105}$} 
        & \textcolor{black}{${0.8640}_{\pm0.0381}$} 
        & \textcolor{black}{${0.9400}_{\pm0.0019}$} 
        \\ 
        \bottomrule
    \end{tabular}}
    \label{complete_lables}
\end{table*}

\subsection{Training Strategies}
\label{trainingstrategies}
The training algorithms of \texttt{SFGL}, \texttt{SFGL+GPT}, \texttt{SFGL+\underline{GCN}} and \texttt{SFGL+\uwave{GCN}} are outlined in Algorithms \ref{algorithm1} and \ref{algorithm2}. The primary difference between \texttt{SFGL} and \texttt{SFGL+GPT} lies in steps 5 and 7, where \texttt{SFGL+GPT} incorporates GPT responses to enhance text embeddings. Similarly, the distinction between \texttt{SFGL+\underline{GCN}} and \texttt{SFGL+\uwave{GCN}} is that the former employs a LM to generate text embeddings, while the latter further refines them using an LLM. Specifically, \texttt{SFGL+\uwave{GCN}} replaces the original node textual sequences with GPT-generated responses for finetuning.
All these models use a GCN as the final classifier.

\subsection{Limitation}
%\circlednum{1} 
Due to the inherent differences in the formation mechanisms of KNN graphs and real-world networks, constructing a fully exact scale-free KNN graph remains challenging. A potential solution is to incorporate the scale-free property as a constraint within an optimization objective and then optimize it to learn an exact latent graph, which presents an intriguing direction for future research.
%\circlednum{2} 
Furthermore, this work primarily focuses on citation networks. Extending our findings and methodologies to other types of networks, such as social and biological networks, represents another promising avenue for future exploration.

\begin{algorithm}[!tb]
\caption{Scale-Free Graph-Language Model (\texttt{SFGL+\underline{GCN}} / \texttt{SFGL+\uwave{GCN}})}
\begin{algorithmic}[1]
\State Return to step $1$ of Algorithm \ref{algorithm1}, replace $\mathbf{F}$ with $\mathbf{E}$ generated in step $7$.
\State Repeat steps $2, 3$, and $4$ of Algorithm \ref{algorithm1} once to generate better pseudo-labels.
\State Repeat steps $6, 7, 8$, and $9$ of Algorithm \ref{algorithm1} once to re-classify the unlabeled nodes.
\end{algorithmic}
\label{algorithm2}
\end{algorithm}

\begin{figure*}[!tb]
    \subfigure[\texttt{Cora}]{\includegraphics[scale=0.245,trim=0 0  50 45,clip]{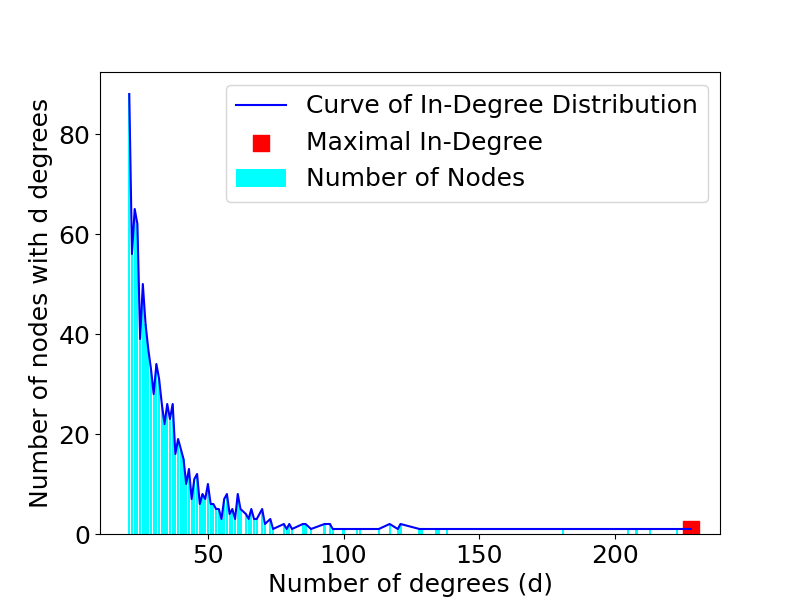}\label{cora20}} 
    \subfigure[\texttt{Pubmed}]{\includegraphics[scale=0.245,trim=0 0  50 45,clip]{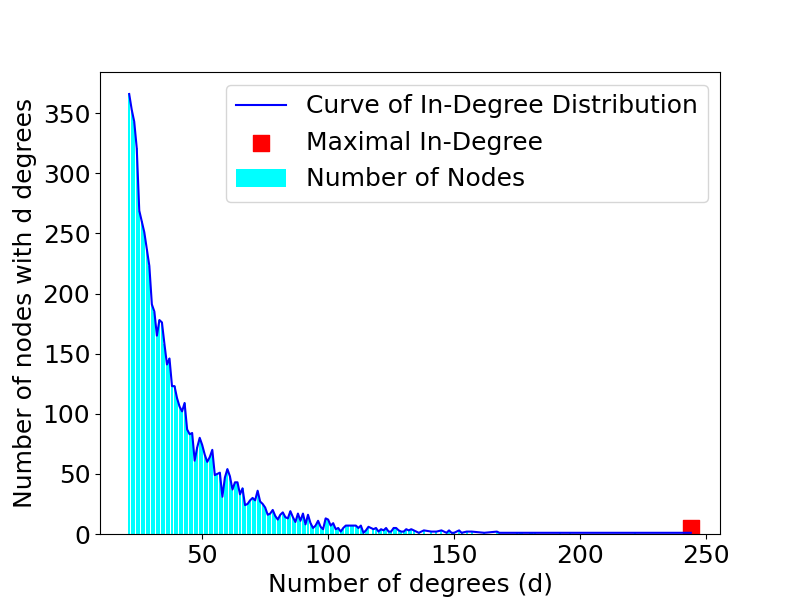}\label{pubmed20}} 
    \subfigure[\texttt{ogbn-arxiv}]{\includegraphics[scale=0.245,trim=0 0  50 45,clip]{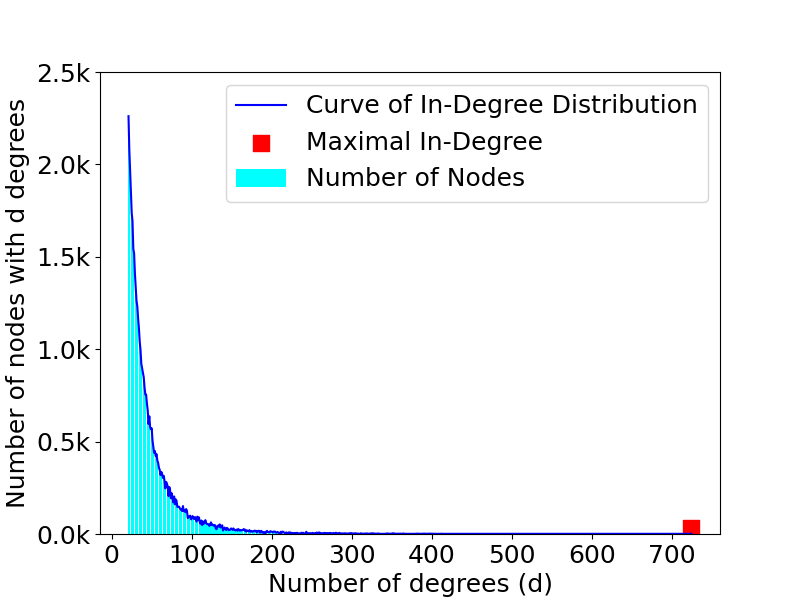}\label{ogbn-arxiv20}}
    \vspace{-2mm}
    \caption{In-degree distribution of KNN graphs on different datasets.}
    \label{in_degree_distribution_a1}
     %\vspace{-1mm}
\end{figure*}

\begin{figure*}[!tb]
     \subfigure[\texttt{Cora}] {\includegraphics[scale=0.245,trim=0 0  50 50,clip]{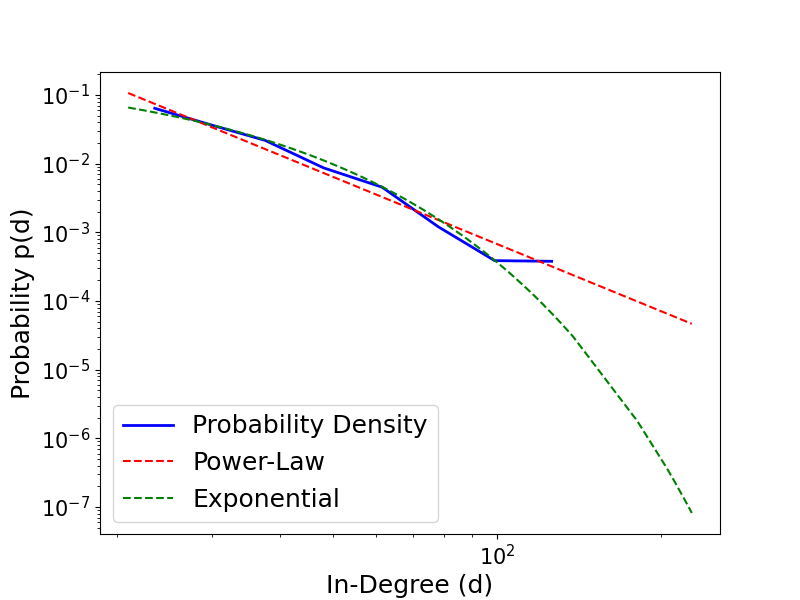}\label{fig:cora20}}
    \subfigure[\texttt{Pubmed}]{\includegraphics[scale=0.245,trim=0 0  50 50,clip]{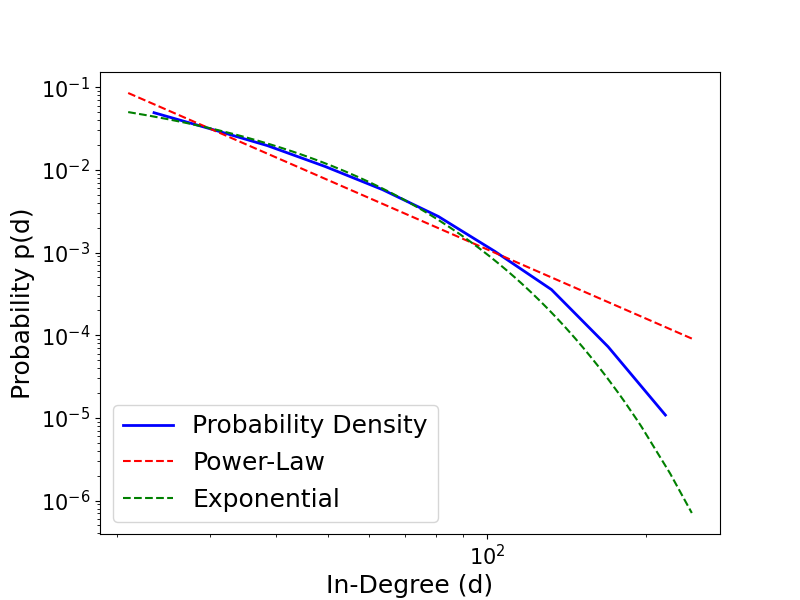}\label{fig:pubmed20}}
     \subfigure[\texttt{ogbn-arxiv}]{\includegraphics[scale=0.245,trim=0 0  50 50,clip]{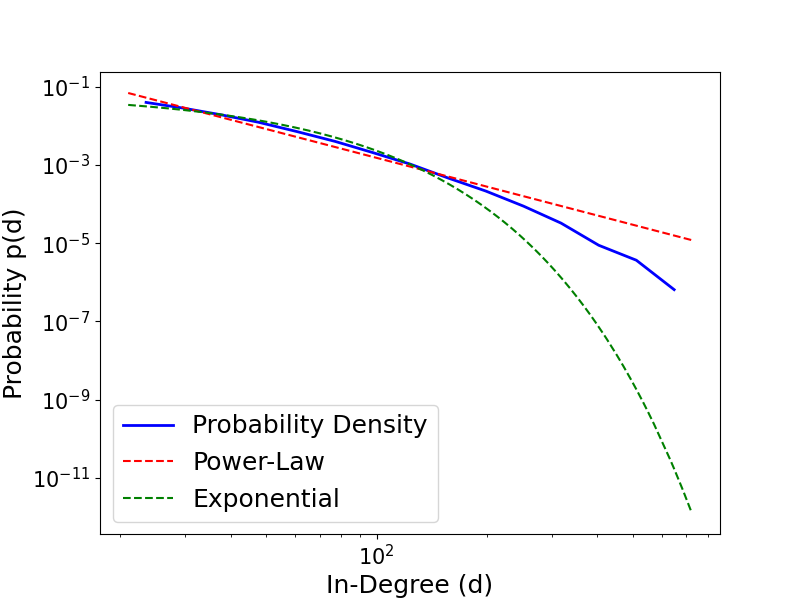}\label{fig:ogbn-arxiv20}}
      \vspace{-2mm}
     \caption{Curve fitting to the in-degree distribution of KNN graphs on different datasets. }
     %\vspace{-1mm}
     \label{fitting_curves_a1}
 \end{figure*}

\begin{figure*}[!tb]
    \centering
    \subfigure[\texttt{SLAPS}]{\includegraphics[scale=0.35,trim=0 0  50 45,clip]{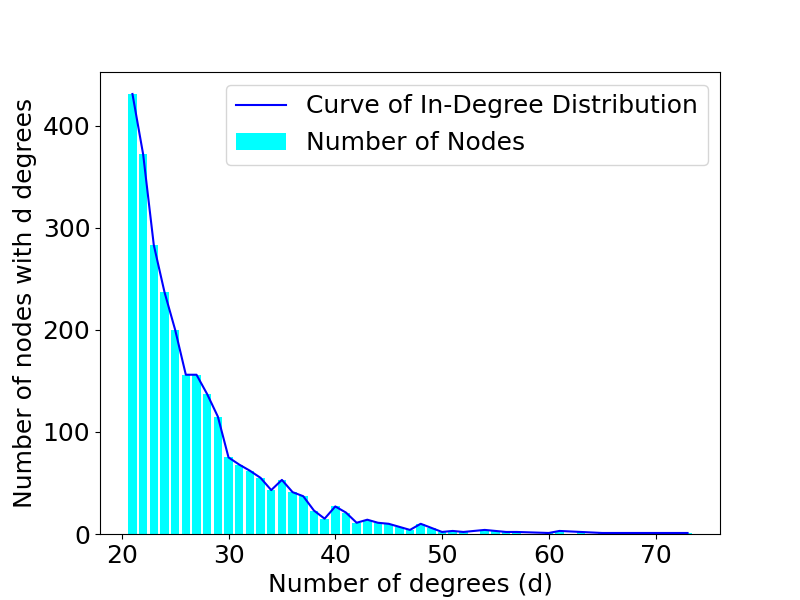}} 
    \subfigure[\texttt{Euclidean}]{\includegraphics[scale=0.35,trim=0 0  50 45,clip]{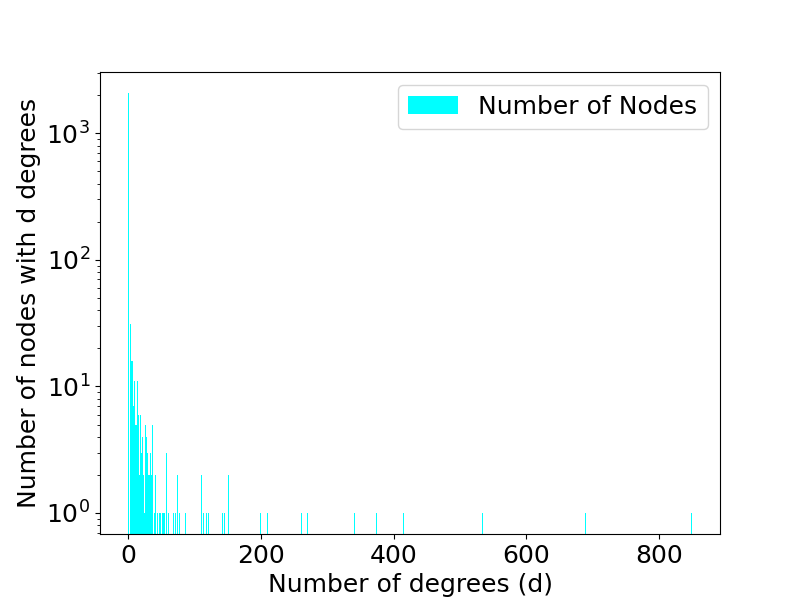}}
    \caption{In-degree distribution of graph generated by (a) SLAPS and (b) KNN (Euclidean).}
    \label{other_distribution}
     %\vspace{-1mm}
\end{figure*}

\end{document}